  \setlist{leftmargin=*}
\def\base{\mathrm{base}}
\def\Ex{\bE}
\def\PR{\mathrm{PR}}
\def\reals{\bR}
\title{Learning in reverse causal strategic environments with ramifications on two sided markets}
\author{%
  Seamus Somerstep   \\
  Department of Statistics \\
  University of Michigan\\
  \texttt{smrstep@umich.edu} \\
  \And
 Yuekai Sun \\
  Department of Statistics \\
  University of Michigan \\
  \texttt{yuekai@umich.edu} \\
  \And
   Ya'acov Ritov \\
  Department of Statistics \\
  University of Michigan \\
   \texttt{yritov@umich.edu} \\
}
\begin{document}

\lhead{Published as a conference paper at ICLR 2024}
\maketitle

\begin{abstract}
Motivated by equilibrium models of labor markets, we develop a formulation of causal strategic classification in which strategic agents can directly manipulate their outcomes. As an application, we compare employers that anticipate the strategic response of a labor force with employers that do not. We show through a combination of theory and experiment that employers with performatively optimal hiring policies improve employer reward, labor force skill level, and in some cases labor force equity. On the other hand, we demonstrate that performative employers harm labor force utility and fail to prevent discrimination in other cases.
\end{abstract}
\section{Introduction}
In many applications of predictive modeling, the model itself may affect the distribution of samples on which it has to make predictions; this problem is known as strategic classification \citep{hardt2015Strategic, JMLR:v13:brueckner12a} or performative prediction \citep{perdomo2020Performative}. For example, traffic predictions affect route decisions, which ultimately impact traffic. Such situations can arise in a variety of applications; a common theme is that the samples correspond to strategic agents with an incentive to ``game the system'' and elicit a desired outcome from the model.

In the standard strategic classification setup, the agents are allowed to modify their features, but they do not modify the outcome that the predictive model targets. An example of this is spam classification: spammers craft their messages (\eg\ avoiding certain tokens) to sneak them past spam filters. There is a line of work on \emph{causal strategic classification} that seeks to generalize this setup by allowing the agents to change both their features and outcomes, usually by incorporating a causal model between the two \citep{miller2020Strategic,KleinbergAcm2020,haghtalab2023ondemand,a-tale-of-two-shifts}.

In this paper, motivated by equilibrium models of labor markets (see \citet{FANG2011133S} for a survey), we study a strategic classification setup in which the agents are able to manipulate their attributes via a \emph{reverse causal} mechanism. This complements prior work on causal strategic classification, in which the agents manipulate their attributes via causal mechanisms. As an example of reverse causal strategic classification, we consider the employer's problem in labor market models. In particular, we study the consequences (in terms of employer and labor force welfare) of hiring policies that anticipate reverse causal strategic responses (we will refer to such anticipatory policies as strategic and later performative or optimal). 
\begin{enumerate}
\item In the simple \citet{coate1993Will} labor market model, we show theoretically that such strategic policies lead to higher employer rewards (compared to non-strategic hiring policies). Thus, rational employers should be performative. Further, such hiring policies improve labor force skill level and equity, so performative employers also benefit the labor force.
\item Unfortunately, we also observe that in some aspects, the desirable properties of (reverse causal) strategic hiring policies are brittle. To study their robustness, we developed a more sophisticated general equilibrium labor market model. We show empirically that while our theory generalizes, strategic hiring policies will harm workers by reducing their aggregate welfare and can still lead to disparities amongst labor force participants.
\end{enumerate}
\textbf{Related Work}

\textit{Performative Prediction:}
Performative prediction, introduced in \citet{perdomo2020Performative}, seeks to study distribution shifts that are dependent on model deployment. In particular, the reverse causal distribution map considered in this paper leads to a form of subpopulation shift \cite{maity2021linear,maity2022Understanding}. This line of research has been extended to the stochastic optimization setting in the works \citep{mendler-dunner2020Stochastic, Drusvyatskiy2021, Wood2021,maity2022Predictorcorrector}. Stateful performative prediction, introduced in \citet{statefulpp-2022-aistats}, allows the map $\mathcal{D}$ to depend on both $\theta$ and the current data distribution. The authors of \citet{izzo2021How} and \citet{izzo2022How} propose methods for minimizing the performative risk under a parametric assumption ($\cD(\theta) = p(z;f(\theta))$). The work \citet{outside-the-echo-chamber} establishes the necessary conditions for the performative risk to be convex. The authors of \citet{jagadeesanregret2022} introduce a zero'th order algorithm for minimizing regret in a performative setting. Our algorithm for anti-causal strategic learning is similar in spirit to the work of \citet{izzo2021How}. The main difference is that in our case the performative map is known apriori, which simplifies the procedure and allows for use beyond the parametric setting.

\textit{Strategic Classification:} Although it predates performative prediction, strategic classification \citet{hardt2015Strategic} can be viewed as an instance of performative prediction in which users game their features. The authors of \citet{strat-class-mp-2021, Levanon-Generalized-2022} give efficient algorithms for learning in general strategic settings. The work \citet{Chen-Strategy-Aware-2020} introduces methods for minimizing Stackelberg regret in online strategic classification. The authors of \cite{Yu-Yang-Fan-2022} establish algorithms for a strategic offline reinforcement learning problem. One line of work seeks to study the case where users and the learner do not share all information; the authors of \citet{alt-micro-found-icml-2021} assume users view a noisy version of the model; the work \citet{Dong-Revealed-Preferences-2017} assumes the learner is blind to the strategic agents utility function; the works \citet{pmlr-v139-ghalme21a, BechavodInformation2022, Barsotti-ICJAI-2022} study the case where the strategic agents must also learn the deployed classifier. The authors of \cite{zrnic2021who} study an extension of strategic classification where the roles are reversed; the agents make strategic decisions before the learner deploys a model. In order to introduce interaction among the strategic agents, the authors of \cite{liu22strategic} study a problem where agents compete in contests.  

A line of work similar in vein to our work and strategic classification is causal recourse \cite{algorithmic-recourse-2021}, \cite{improvement-recourse-2023} which seeks to provide actionable intervents for strategic agents to improve their features.

Our work is most aligned with recent efforts to inject causality into strategic classification. Much of the focus has been on improvement (incentivizing agents to improve features in a way that causes responses to improve), beginning with the works \citet{Alon-MultiAgent-2020, KleinbergAcm2020, haghtalab2023ondemand}. The authors of \citet{miller2020Strategic} show that learning good models for improvement is equivalent to solving causal inference problems. A follow-up line of work considers this problem in specific scenarios; with \citet{shravit-caus-strat-LR-2020} focused on regression, and \cite{pmlr-v162-harris22a} on instrumental variables. The work \citet{Dunner-Ding-Wang2022} reframes performative predictions as causal interventions. A closely related work is \citet{a-tale-of-two-shifts}, which gives a method for minimizing the causal strategic empirical risk.

\textit{Economic models of labor markets:} Studies of statistical theories of discrimination began with the two lines of work \citep{arrow1971, phelps1972}. In \citet{phelps1972}, discrimination in labor markets is due to two groups of workers being exogenous, while \citet{arrow1971} shows that even if groups are endogenous, discrimination can occur in equilibrium. These ideas were developed by Coate and Loury into a labor market model in \citep{coate1993Will}. More recently, the authors of \citet{moro2004general} and \citet{moro2003Affirmative} extended this model by including interactions among groups of workers and a wage structure set by inter-employer competition. The line of work \citet{FryarColorBlindAA, FryarWinnerTakeAll, FryarValuingDiversity, NBERw23811} studies the impacts of different action policies (\ie{} color blinded vs color sighted) in a variety of market settings including those similar in spirit to \citep{coate1993Will}. The more contemporary works \citet{Liu2023DisparateEquilibria,KannanDownStream2019} utilize the Coate and Loury model to study discrimination in algorithmic decision making. For a survey on statistical discrimination in labor market, see \citep{FANG2011133S}.

\textbf{Preliminaries}

Performative prediction, introduced in \citet{perdomo2020Performative}, seeks to study distribution shifts that are dependent on model deployment. To be more specific, if the learner picks model parameter $\theta$, then the next samples are drawn from $\mathcal{D}(\theta)$. Thus, a performative learner seeks to minimize \[ \PR(\theta) \triangleq \Ex_{Z\sim \mathcal{D}(\theta)}[\ell(Z;\theta)].\] This problem can be non-convex; most works assume the learner utilizes a strategy of repeatedly retraining a model. Under this strategy, at round $t+1$ of training, the learner deploys
\[\theta_{t+1} = \argmin_{\theta'} \Ex_{Z\sim \mathcal{D}(\theta_t)}[\ell(Z;\theta')].\]
The authors of \citet{perdomo2020Performative} show that this strategy (known as repeated risk minimization) will converge to stable points, which are defined by:
\[\theta_{\text{stab}} \in \argmin_{\theta'} \Ex_{Z\sim \mathcal{D}(\theta_{\text{stab}})}[\ell(Z;\theta')].\]

The second performative solution concept is a performatively optimal point, which satisfies
\[\theta_{\text{opt}} \in \argmin_{\theta'} \Ex_{Z \sim \mathcal{D}(\theta')}[\ell(Z; \theta')].\]
It is worth emphasizing that optimal points are generally not stable, and stable points are generally not optimal; in particular, stable points are not minima of the performative loss, and optimal points need not be the best response to their own induced distribution. 

The canonical example of performative prediction is strategic classification.

\begin{example}[\citet{hardt2015Strategic}] Although it predates performative prediction, strategic classification \citet{hardt2015Strategic} can be viewed as an instance of performative prediction in which users game their features. More concretely, there are users with features $x \in \cX$, discrete outcomes $y \in \mathcal{Y} \simeq [K]$, and a corresponding base distribution $P$ over $\mathcal{Z} = \mathcal{X} \times \mathcal{Y}$. The goal of learning is to deploy a model $f: \mathcal{X} \rightarrow \mathcal{Y}$ using training data drawn from $P$. The catch is the following: post-training and pre-testing of $f$ users will game their features (at some cost $c: \mathcal{X} \times \mathcal{X} \rightarrow \mathbb{R^+}$ ) via the update rule:
\[x \rightarrow x_f \triangleq \argmax_{x'\in\cX} f_\theta(x') - c(x,x').\] The goal is to deploy a classifier that is accurate post-distribution shift. The ideal classifier minimizes $\mathbb{E}_{P}\ell(f(x_f), y)$.
\end{example}
\section{Reverse causal strategic learning}
In the standard strategic classification setting, agents update their features $x$, but they do not update their labels $y$. This is common when the agents wish to ``game" the model; the standard example is spam classification: spam creators change their emails to try and avoid spam filters (for example, by avoiding certain tokens), but their emails will remain spam. 

Recently, \citet{a-tale-of-two-shifts} considered a more general problem setting by permitting agents to modify their labels (in addition to their features) through a causal/structural model. Here, the agents still change their features, but changes to the features can propagate through the causal model and (indirectly) lead to changes in their labels. In the following, we consider a \emph{reverse causal} setting in which the agents change their labels and the changes propagate through a structural model to their features. Here are two motivating examples for the reverse causal strategic learning setting: one from labor economics and one from game models of content platforms.

\begin{example}[\cite{coate1993Will}]
\label{ex:coate-loury}
Consider an employer that wishes to hire skilled workers. The worker skill level is represented as $Y\in\{0,1\}$ with $Y \sim \text{Ber}(\pi)$. The employer implements a (noisy) skill level assessment, with the outcome represented as $X\in[0,1]$. The employer receives utility $p_+$ from a ``qualified" hire and suffers loss $p_-$ from an "unqualified" hire, and seeks to train a classifier $f(x):[0,1]\to\{0,1\}$ that optimizes their overall utility:
\[\textstyle
\max_{f\in\cF} \big[p_+ \pi \mathbb{P}(f(x) = 1\mid y=1) - p_- (1-\pi) \mathbb{P}(f(x) = 1 \mid y = 0 )],
\]
Thus far, this is a standard classification problem because the workers are non-strategic, \ie{} they are unaffected by the employer's hiring policy. To introduce a strategic component, the workers are allowed to become qualified (at a cost) in response to the employer's policy. Let $w > 0$ be the wage paid to hired workers and $c$ be the (random and drawn from CDF $G$) cost to a worker of becoming skilled. For an unskilled worker, the expected utility of becoming skilled is
\[\textstyle
u_w(f,y)\triangleq\begin{cases}\int_{[0,1]} wf(x)d\Phi(x\mid 1) - c & \text{if the worker becomes skilled},\\\int_{[0,1]} wf(x)d\Phi(x\mid 0) & \text{if the worker remains unskilled},\end{cases}
\]
where $\Phi(x\mid y)$ is the conditional distribution of skill level assessments. So a strategic worker becomes skilled if
\[\textstyle
w\int_{[0,1]} 1_{\{f(x) = 1\}}d\Phi(x|Y=1) - w \int_{[0,1]} 1_{\{f(x) = 1\}}d\Phi(x|Y=0) -c >0.
\]
\end{example}

\begin{example}[\cite{hron2023modeling}, \cite{Jagadeesan2023modeling}]
\label{ex:exposure-games}
Consider a social media system. There is a demand distribution $u \sim P_d; u \in \mathbb{R}^d$ from which users arrive to the system sequentially, with $u^t$ representing a vector of characteristics of the $t'th$ user. There is also a population of $n$ \textit{content producers} which each produce content $\{s^t_i\}_{i=1}^n; \: s^t_i \in \mathbb{R}^d$. Associated with a piece of content $s$ is some noisy measurement $X \in \mathbb{R}^d$ of the content generated via CDF $\Phi(\cdot \mid s)$. For example, $X$ could consist of the sentiment of ``comment'' $s$ receives from users, as well as the amount of likes and views $s$ generates. In general, we interpret $\mathbb{E} ||X||_2^2$ as the ``amount'' of the content that a creator with signal $X$ produces. The learner wishes to train a recommendation system $R(\Sigma, x, u)$ that recommends content $s_i^t$ to the user $u_t$ with probability: 
\[ P(u_t \text{ is assigned content } s_i^t) = R(\Sigma, x_i^{t}, u^t) = \frac{\exp(\frac{1}{\tau} (x_i^{t})^T\Sigma u^t)}{\sum_{j=1}^n\exp(\frac{1}{\tau} (x_j^t)^T \Sigma u^t)}\]
The learner wishes to maximize the true enjoyment a user receives from content, which is given by
\[r(\Sigma^*, u^t, s^t) = (s^t)^T \Sigma^* u^t.\]
Content producers are not static, however, and will adapt content production. Since exposure to users is directly connected to the learned recommendation system, a content creator with original content $s$ will tend towards producing content $s'$ that optimizes
\[\mathbb{E}_{u \sim P_d; x\sim \Phi(\cdot | s')} R(\Sigma, x, u) - \mathbb{E}_{x\sim \Phi(\cdot \mid s')}||x||_2^2\]
\end{example}
\subsection{The reverse causal strategic learning problem}
In reverse causal strategic learning, the samples $(X,Y)$ are agents, and the learner wishes to learn a (possibly randomized) policy/rule $f:\cX\to \cY$ to predict the agents' responses $Y$ from their features $X$. The agents are fully aware of the learners policy $f$ and as such, we assume that the response of the agents to $f$ is to change their outcomes $Y$ strategically:
\begin{equation}
\label{eq: reverse causal strategic response}
Y\to Y_+(f,Y)\triangleq\argmax_{y'}W(f,y') - c(y',Y),
\end{equation}
where $W$ is a welfare function that measures the agent's welfare and $c$ is a (possibly random) cost function that encodes the cost (to the agent) of changing their outcome from $Y$ to $y'$.
\begin{example}[Example \ref{ex:coate-loury} cont]
In the labor market example, the agents are the workers, and their welfare is their expected wage
\[\textstyle
W(f,y') = \int_{[0,1]}wf(x)d\Phi(x\mid y'); y'\in \{0,1\},
\]
while the cost is
\[\textstyle
c(y',y) = cy'; \: c \sim G, \: y'\in \{0,1\}.
\]
\end{example}

\begin{example}[Example \ref{ex:exposure-games} cont]
In the content creation example, the agents are the content producers. Their welfare function is
\[\textstyle W(\Sigma, s') = \mathbb{E}_{u \sim P_d; x\sim \Phi(\cdot | s')} R(\Sigma, x, u),\]
while the cost function is simply the effort $\mathbb{E}_{x\sim \Phi(\cdot \mid s')}||x||_2^2$ required to produce a certain ``amount" of content.
\end{example}

The key ingredient that makes this setting \textit{reverse causal} is that strategic change in the outcome propagates to cause a change in the generation of features $X$ via:
\[
X \sim \Phi(\cdot \mid Y ) \to X_+(f,Y) \sim \Phi(\cdot\mid Y_+(f,Y)),
\]
where $\Phi(\cdot\mid Y=y)$ is the conditional distribution of the features $X$ given the outcome $Y$ in the base distribution of the agents. In other words, the agents are unable to directly change their features in the reverse causal setting; \emph{they can only change their features by changing their outcomes.} This distinguishes the reverse causal strategic learning setting from the problem settings in other works on performative prediction and strategic classification \citep{hardt2015Strategic,a-tale-of-two-shifts}.

Going back to the learner, their welfare depends on the post-response agents, so they must account for the strategic response of the agents. A learner which does this is \textit{performative} and minimizes the post-strategic response loss: 
\begin{equation}
\label{eq: anti-causal-objective}
\text{min}_{f \in \cF}\Ex\big[\ell(f(X_+(f,Y)),Y_+(f,Y))\big].
\end{equation}
In the rest of the paper, we will return to the labor market, demonstrating that a learner with a proactive strategy is often mutually beneficial for both the learner and strategic agents.  A discussion on the minimization of Objective \ref{eq: anti-causal-objective} is reserved for appendix A in which we provide the a simple algorithm for the learner to minimize the performative loss in a stochastic setting.

\section{Strategic hiring in the Coate-Loury model}
In this section, we focus on example \ref{ex:coate-loury} and study the impact of performatively optimal hiring policies on employer and labor force welfare in the Coate-Loury model. Despite the strategic nature of the labor force, prior studies of labor market dynamics generally assume employers are merely reactive (instead of proactive) to the strategic responses of the labor force \citep{FANG2011133S}.
We show that in a variety of market settings, performatively optimal hiring policies improve employer welfare, labor force welfare, and labor market equity.
\subsection{Performative Prediction in the Coate-Loury model}
Recall the setup from example \ref{ex:coate-loury}. We impose some standard assumptions on the problem:
\begin{enumerate}
\item $\frac{P(X=x\mid y=1)}{P(X =x \mid y=0)} = \frac{\phi(x \mid y =1)}{\phi(x \mid y=0)}$ is monotonically increasing in $x$,
\item $\phi(x\mid y)$ is continuously differentiable for $y\in\{0,1\}$,
\item  $G(0) = 0$, $G$ is continuously differentiable, and $c\sim G$ is almost surely bounded above by $M_G$.
\end{enumerate}
The second two assumptions are technical, but the first is more substantive. It resembles the monotone likelihood ratio assumption in statistical hypothesis testing that ensures consistency of hypothesis tests. In this case, it ensures the optimal hiring policy is a threshold policy of the form $\ones\{x\ge\theta\}$ for some $\theta$, this is the operating assumption in the original \cite{coate1993Will} paper, and we will proceed with it as well.

Armed with the assumption of a threshold hiring policy, we discuss performative prediction in the \citet{coate1993Will} model. Recall the strategic response of the labor force participants in example \ref{ex:coate-loury}. In aggregate, the proportion of skilled labor force participants becomes
\[ \pi(\theta) = G(w[\mathbb{P}(x>\theta\mid y=1) - \mathbb{P}(x>\theta\mid y=0)]).\]
Given this, the employer's strategic hiring problem is really an instance of performative prediction with the performative employer utility given by
\[U_{\text{perf}}(\theta) \triangleq p_+\mathbb{P}(X>\theta \mid y=1) \pi(\theta) - p_-\mathbb{P}(X>\theta \mid y=0) (1-\pi(\theta)).\]
As in other performative settings, the employer may opt to deploy an optimal or stable policy. An employer that deploys an optimal policy anticipates and accounts for the labor force participants responses to their decisions and thus deploys a policy that solves the performative problem
\begin{equation}
\theta_\text{opt} \in \argmax_\theta \:{} U_{\text{perf}}(\theta). 
\label{eq:strategic-employer-problem}
\end{equation}
An employer that deploys a stable policy is reactive rather than anticipatory towards labor force strategic responses, \ie\:they deploy a stable policy, which is any policy that maximizes employer utility on its own induced distribution:
\[\theta_{\text{stab}} \in \text{argmax}_{\theta} \:  p_+ \pi(\theta_\text{stab}) \mathbb{P}(X>\theta \mid y=1) - p_- (1-\pi(\theta_{\text{stab}})) \mathbb{P}(X>\theta \mid y = 0 ).\]

The concept of performative vs stable solutions in the context of a micro-economic model has a game theoretic interpretation. A performative solution is the Stackleberg equilibrium to a 2-player game between the firm and the labor force, with the firm as the leader and the labor force as the follower. On the other hand, the stable solution is the Nash equilibrium between the firm and the labor force if there is no leader-follower dynamic.

We will study both low-wage and high-wage markets. Low-wage markets possess the desirable property that any employer that follows a ``greedy" strategy, \ie\:they sequentially deploy  
\begin{equation}
\theta_{t+1} \gets \argmax_\theta p_+\mathbb{P}(X>\theta \mid y=1) \pi(\theta_t) - p_-\mathbb{P}(X>\theta \mid y=0) (1-\pi(\theta_t)),
\label{eq:non-strategic-employer-problem}
\end{equation}
will eventually converge on a stable policy. This is because \eqref{eq:non-strategic-employer-problem} is an instance of RRM, and low worker wages (in addition to some regularity conditions) will ensure that the requirements for RRM convergence provided in \citet{perdomo2020Performative} are satisfied (see Appendix D).

High-wage markets are independently interesting, as in such markets the benefits of optimal policies for a firm will be substantial. On the other hand, in high-wage markets, the convergence of RRM may not be universal. This is not necessary for any of our results (stable policies will always \textit{exist} and any greedy employer that does stabilize will do so on a stable policy). Additionally, our empirical results also demonstrate that, in practice, convergence of reactive firms is not sensitive to market conditions.

Finally, we remark that each theorem will need additional assumptions on the structure of the labor market (for example, worker wage and firm reward). The social situations that the \citet{coate1993Will} model applies to are broad \citet{fang2011Chapter}; and thus the applicability of each theorem will depend on the social environment at hand. For any specific social situation, a full justification of these assumptions would require a study of the model on real labor market data (or the alternative social situation). Such studies are relatively rare in the literature, but two examples are \citet{ArcidiaconoAA2011} (studies admissions into Duke), \citet{AltonjiStatDisc2001} (studies education data).
\subsection{Effects of strategic hiring on employer and labor force welfare}
The main result of this subsection compares employer welfare (in terms of the employer's expected utility) and labor force welfare (in terms of the fraction of skilled labor force participants) resulting from optimal and stable hiring policies.
We impose two additional assumptions on the market. 
\begin{enumerate}
\item there exists $\tilde{\theta} \in [0,1]$ such that $\mathbb{P}(X>\tilde{\theta}\mid y=1) - \mathbb{P}(X>\tilde{\theta}\mid y=0) > \delta_1 > 0$,
\item $\frac{\phi(x|1)}{\phi(x|0)} > \delta_2 > 0$ for any $x\in[0,1]$.
\end{enumerate}
Together, these ensure that some distinction between skilled and unskilled workers is possible, but no policy can perfectly distinguish between the two. These assumptions ensure that stable and optimal policies will be distinct.

We study markets in both the low-wage and high-wage regimes. In the large-wage regime, firms receive substantial benefits from being optimal.
\begin{theorem}
\label{thm:welfare-effects}
If $w > M_G/\delta_1$ and $p_+ > p_-/(\delta_1 \delta_2)$ then the following holds for all stable parameters $\theta_{\text{stab}}$ and all optimal parameters $\theta_{\text{opt}}$:
\begin{enumerate}
    \item $\pi(\theta_{\text{opt}}) > \pi(\theta_{\text{stab}})$.
    \item $U_{\text{perf}}(\theta_\text{stab}) \leq U_{\text{perf}}(\theta_{\text{opt}})/(1+\delta_1)$. 
\end{enumerate}
\end{theorem}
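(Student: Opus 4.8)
The plan is to control the two policies through complementary bounds: a lower bound on $U_{\text{perf}}(\theta_{\text{opt}})$ obtained by exhibiting a single favorable test threshold, and an upper bound on the induced stable proportion $\pi_s := \pi(\theta_{\text{stab}})$ obtained from the stable first-order condition. Throughout I write $T_y(\theta) := \mathbb{P}(X>\theta\mid y)$ and $D(\theta) := T_1(\theta) - T_0(\theta)$, so that $\pi(\theta) = G(wD(\theta))$ and $U_{\text{perf}}(\theta) = p_+\pi(\theta)T_1(\theta) - p_-(1-\pi(\theta))T_0(\theta)$.

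First I would exploit the high-wage hypothesis. The first market assumption supplies a threshold $\tilde\theta$ with $D(\tilde\theta) > \delta_1$; since $w > M_G/\delta_1$ we get $wD(\tilde\theta) > M_G$, and because $c \le M_G$ almost surely we have $G(t) = 1$ for $t \ge M_G$, so $\pi(\tilde\theta) = 1$. Substituting into $U_{\text{perf}}$ annihilates the $p_-$ term and leaves $U_{\text{perf}}(\tilde\theta) = p_+ T_1(\tilde\theta) \ge p_+ D(\tilde\theta) > p_+\delta_1$, using $T_0 \ge 0$. By optimality $U_{\text{perf}}(\theta_{\text{opt}}) \ge U_{\text{perf}}(\tilde\theta) > p_+\delta_1$. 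Since $U_{\text{perf}}(\theta) \le p_+\pi(\theta)T_1(\theta) \le p_+\pi(\theta)$ for every $\theta$, this already forces $\pi(\theta_{\text{opt}}) > \delta_1$.

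Next I would bound $\pi_s$. The stable threshold maximizes $g(\theta) := p_+\pi_s T_1(\theta) - p_-(1-\pi_s)T_0(\theta)$ with $\pi_s$ held fixed, and at an interior optimum its stationarity condition reads $\phi(\theta_{\text{stab}}\mid 1)/\phi(\theta_{\text{stab}}\mid 0) = p_-(1-\pi_s)/(p_+\pi_s)$. The second market assumption bounds the left side below by $\delta_2$, which rearranges to $\pi_s < p_-/(p_- + \delta_2 p_+)$; feeding in $p_+ > p_-/(\delta_1\delta_2)$, so that $\delta_2 p_+ > p_-/\delta_1$, collapses this to the sharp estimate $\pi_s < \delta_1/(1+\delta_1)$. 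Both conclusions then follow. Part 1 holds because $\pi(\theta_{\text{stab}}) = \pi_s < \delta_1/(1+\delta_1) < \delta_1 < \pi(\theta_{\text{opt}})$. Part 2 holds because $U_{\text{perf}}(\theta_{\text{stab}}) = g(\theta_{\text{stab}}) \le p_+\pi_s < p_+\delta_1/(1+\delta_1)$, whence $(1+\delta_1)U_{\text{perf}}(\theta_{\text{stab}}) < p_+\delta_1 < U_{\text{perf}}(\theta_{\text{opt}})$.

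The main obstacle is rigor around the stable first-order condition rather than the algebra. I must argue the stable maximizer is interior before invoking stationarity; the monotone-likelihood-ratio assumption is what makes this work, since it renders the bracket $p_-(1-\pi_s) - p_+\pi_s\,\phi(\theta\mid1)/\phi(\theta\mid0)$ monotone in $\theta$ and hence $g$ single-peaked. I also need to dispose of the degenerate fixed point $\pi_s = 0$ (where $D(\theta_{\text{stab}}) = 0$), for which $U_{\text{perf}}(\theta_{\text{stab}}) = 0$ and both claims hold trivially, and to record that $U_{\text{perf}}(\theta_{\text{stab}}) = g(\theta_{\text{stab}}) \ge g(1) = 0$, so that multiplying by $1+\delta_1$ preserves the inequality. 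A secondary point is that the statement quantifies over all stable and all optimal parameters, so the bounds must be uniform over the possibly non-unique maximizers; this is automatic, since every stable $\pi_s$ satisfies the same first-order estimate and every optimal threshold inherits the same test-point lower bound.
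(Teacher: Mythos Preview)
Your proposal is correct and follows essentially the same approach as the paper: both proofs hinge on (i) the stable first-order condition $\phi(\theta_{\text{stab}}\mid 1)/\phi(\theta_{\text{stab}}\mid 0)=p_-(1-\pi_s)/(p_+\pi_s)$ together with the lower bound $\delta_2$ to cap $\pi_s<p_-/(p_-+\delta_2 p_+)$, and (ii) the test threshold $\tilde\theta$ with $\pi(\tilde\theta)=1$ (from $w>M_G/\delta_1$) to force $U_{\text{perf}}(\theta_{\text{opt}})>p_+\delta_1$. Your bookkeeping is slightly tighter---you extract $\pi_s<\delta_1/(1+\delta_1)$ and $\pi(\theta_{\text{opt}})>\delta_1$ directly rather than routing through the intermediate bound $U(\theta_s)\le p_-/\delta_2$---and you are more careful than the paper about the interiority of the stable maximizer and the degenerate fixed point $\pi_s=0$, but the substance is the same.
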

On the other hand, in the low-wage regime, the firm's benefits from an optimal strategy may be small.
\begin{theorem}
\label{thm:welfare-effects-2}
If $w > 0$, $p_+ > \text{max}(1, \frac{ \pi(\tilde{\theta})}{(1-\pi(\tilde{\theta}))}p_-)$ and $\delta_1 \delta_2 > p_-$, then the following holds for all stable parameters $\theta_{\text{stab}}$ and all optimal parameters $\theta_{\text{opt}}$:
\begin{enumerate}
    \item $\pi(\theta_{\text{opt}}) > \pi(\theta_{\text{stab}})$.
    \item $U_{\text{perf}}(\theta_\text{stab}) \leq U_{\text{perf}}(\theta_{\text{opt}})$. 
\end{enumerate}
\end{theorem}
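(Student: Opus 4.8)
The plan is to separate the two claims: prove the second directly from optimality, and reduce the first to a one-line monotonicity argument plus a distinctness check. Write $A(\theta)=\mathbb{P}(X>\theta\mid y=1)$ and $B(\theta)=\mathbb{P}(X>\theta\mid y=0)$, and for a fixed skill fraction $\pi$ set $V(\theta,\pi)=p_+A(\theta)\pi-p_-B(\theta)(1-\pi)$, so that $U_{\text{perf}}(\theta)=V(\theta,\pi(\theta))$ and, by the definition of stability, $\theta_{\text{stab}}\in\arg\max_\theta V(\theta,\pi_s)$ with $\pi_s:=\pi(\theta_{\text{stab}})$. Claim~2 is then immediate, since $\theta_{\text{opt}}$ maximizes $U_{\text{perf}}$ over all thresholds, so $U_{\text{perf}}(\theta_{\text{opt}})\ge U_{\text{perf}}(\theta_{\text{stab}})$; the market hypotheses are not needed here (they only sharpen this gap, as the factor $1/(1+\delta_1)$ does in Theorem~\ref{thm:welfare-effects}).

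For Claim~1 the key observation is that $V$ is strictly increasing in its second argument, $\partial_\pi V=p_+A(\theta)+p_-B(\theta)>0$. Suppose toward a contradiction that $\pi(\theta_{\text{opt}})\le\pi_s$. Then
\[
U_{\text{perf}}(\theta_{\text{opt}})=V(\theta_{\text{opt}},\pi(\theta_{\text{opt}}))\le V(\theta_{\text{opt}},\pi_s)\le\max_\theta V(\theta,\pi_s)=V(\theta_{\text{stab}},\pi_s)=U_{\text{perf}}(\theta_{\text{stab}}).
\]
If $\pi(\theta_{\text{opt}})<\pi_s$ the first inequality is strict and contradicts Claim~2; hence $\pi(\theta_{\text{opt}})=\pi_s$, which forces $V(\theta_{\text{opt}},\pi_s)=\max_\theta V(\theta,\pi_s)$, i.e. $\theta_{\text{opt}}$ also maximizes $V(\cdot,\pi_s)$. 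Because the monotone-likelihood-ratio assumption makes $\theta\mapsto V(\theta,\pi_s)$ strictly quasiconcave (its $\theta$-derivative equals $\phi(\theta\mid0)[p_-(1-\pi_s)-p_+\pi_s\,\phi(\theta\mid1)/\phi(\theta\mid0)]$, whose sign changes once), this maximizer is unique, so $\theta_{\text{opt}}=\theta_{\text{stab}}$. Thus Claim~1 reduces entirely to showing the optimal and stable thresholds are distinct.

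To establish distinctness I would differentiate $U_{\text{perf}}$ at $\theta_{\text{stab}}$. Decomposing $U_{\text{perf}}'(\theta)$ into the ``myopic'' term $-p_+\phi(\theta\mid1)\pi(\theta)+p_-\phi(\theta\mid0)(1-\pi(\theta))$ and the ``performative correction'' $\pi'(\theta)\,(p_+A(\theta)+p_-B(\theta))$, the first-order condition defining $\theta_{\text{stab}}$ kills the myopic term, leaving
\[
U_{\text{perf}}'(\theta_{\text{stab}})=G'\!\big(w\Delta(\theta_{\text{stab}})\big)\,w\,\big(\phi(\theta_{\text{stab}}\mid0)-\phi(\theta_{\text{stab}}\mid1)\big)\,\big(p_+A(\theta_{\text{stab}})+p_-B(\theta_{\text{stab}})\big),
\]
where $\Delta=A-B$. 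As the last factor is positive and $w>0$, this is nonzero precisely when $G'>0$ at $w\Delta(\theta_{\text{stab}})$ and the likelihood ratio at $\theta_{\text{stab}}$ differs from $1$; then $\theta_{\text{stab}}$ is an interior non-critical point of $U_{\text{perf}}$, cannot be its maximizer, and so $\theta_{\text{opt}}\ne\theta_{\text{stab}}$, giving $\pi(\theta_{\text{opt}})>\pi_s$.

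I expect the distinctness step to be the main obstacle, and this is exactly where the low-wage hypotheses enter. One must use $p_+>\max(1,\frac{\pi(\tilde\theta)}{1-\pi(\tilde\theta)}p_-)$ together with $\delta_1\delta_2>p_-$ and the separation guarantee $\Delta(\tilde\theta)>\delta_1$ to locate $\theta_{\text{stab}}$: these bounds should show that the critical likelihood ratio $p_-(1-\pi_s)/(p_+\pi_s)$ coming from the stable first-order condition lies strictly inside the range of $\phi(\cdot\mid1)/\phi(\cdot\mid0)$ (so $\theta_{\text{stab}}$ is interior) and is bounded away from $1$ (so $\phi(\theta_{\text{stab}}\mid0)\ne\phi(\theta_{\text{stab}}\mid1)$), while $G(0)=0$ and the boundedness of $c\sim G$ keep $w\Delta(\theta_{\text{stab}})$ in the region where $G'>0$. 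Verifying that these inequalities genuinely pin down the nonvanishing of the performative correction is the computational heart of the argument; everything else follows from the monotonicity of $V$ in $\pi$ and the optimality of $\theta_{\text{opt}}$.
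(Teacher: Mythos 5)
Your reduction of Claim~1 to ``$\theta_{\text{opt}}$ and $\theta_{\text{stab}}$ are distinct'' is tidy, and your Claim~2 matches the paper's one-line argument, but the distinctness step you defer is the entire content of the theorem, and the local route you sketch for it does not go through. Your expression $U_{\text{perf}}'(\theta_{\text{stab}})=G'(w\Delta)\,w\,(\phi(\cdot\mid 0)-\phi(\cdot\mid 1))\,(p_+A+p_-B)$ is nonzero only if (i) $G'>0$ at $w\Delta(\theta_{\text{stab}})$ and (ii) the likelihood ratio at $\theta_{\text{stab}}$ differs from $1$. Neither follows from the hypotheses: $G$ is only assumed $C^1$ with $G(0)=0$, and the interior stable first-order condition pins the likelihood ratio at $\theta_{\text{stab}}$ to $p_-(1-\pi_s)/(p_+\pi_s)$, which can equal $1$ (the derived bound $\pi_s<p_-/(p_+\delta_2+p_-)$ only forces the ratio above $\delta_2$, and $\delta_2<1$ is allowed). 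Worse, $\theta=1$ is always a stable point: $G(0)=0$ gives $\pi(1)=0$, and the best response to $\pi=0$ is to hire no one. There the maximizer is at the boundary, so the myopic term does not vanish, while $p_+A(1)+p_-B(1)=0$ kills your ``performative correction''; one finds $U_{\text{perf}}'(1)\ge 0$, so $\theta=1$ is a local maximum of $U_{\text{perf}}$ and no derivative test can exclude it from being optimal. The theorem must cover this stable point, so a purely local argument cannot suffice.

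The paper instead argues globally, and this is where the hypotheses you could not place actually enter. From the stable first-order condition and $\phi(x\mid 1)/\phi(x\mid 0)>\delta_2$ it derives the uniform bound $\pi(\theta_{\text{stab}})<p_-/(p_+\delta_2+p_-)$ for every stable point, which forces $U_{\text{perf}}(\theta)\le p_-/\delta_2$ for any $\theta$ in that low-$\pi$ region; it then evaluates the specific threshold $\tilde\theta$, where $p_+\pi(\tilde\theta)>p_-(1-\pi(\tilde\theta))$ and $\Delta(\tilde\theta)>\delta_1$ give $U_{\text{perf}}(\tilde\theta)>\delta_1>p_-/\delta_2$ by the assumption $\delta_1\delta_2>p_-$. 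Since every optimal point beats $\tilde\theta$, no optimal point can lie in the low-$\pi$ region containing all stable points, which yields $\pi(\theta_{\text{opt}})>\pi(\theta_{\text{stab}})$ directly --- distinctness, the boundary stable point, and the strict inequality all in one step, with no appeal to $G'$ or to quasiconcavity of $V(\cdot,\pi_s)$. If you want to keep your structure, replace the derivative test with this global utility comparison; once you have it, your monotonicity-in-$\pi$ lemma is no longer needed.
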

To see the intuition behind Theorems \ref{thm:welfare-effects} and \ref{thm:welfare-effects-2}, consider an extreme case where $p_- = 0$. Here, since for any $\pi>0$, the best reactive employer response is $\theta = 0$, any stable policy will satisfy $\pi(\theta_{\text{stab}}) = 0$ (and thus $U_{\text{perf}}(\theta_{\text{stab}}) = 0$), while clearly an optimal employer can do better than this and will in general deploy a model that results in $\pi(\theta_{\text{opt}})>0$. Theorems \ref{thm:welfare-effects} and \ref{thm:welfare-effects-2} are a generalization of these simple dynamics.

The alignment between the social welfare of the workers and the reward to the learner is in contrast with other areas of strategic learning. For example, \citet{milli2018Social} show that there is a direct trade-off between the social burden on strategic agents and learner accuracy in strategic classification.

\subsection{Effects of strategic hiring on labor force equity}

Besides labor force skill level, another pressing issue in labor markets is equity. In fact, the Coate-Loury model was developed to show how inequities may arise in labor markets despite the lack of explicit discrimination in the market. First, we recall the original two-group version of the Coate-Loury model.

Employers now seek to hire workers from a large population of labor force participants consisting of two identifiable groups denoted as $\text{Maj}$ or $\text{Min}$, with $\lambda$ denoting the fraction of workers with a $\text{Maj}$ group membership. Crucially, employer profits, worker costs, wages, and worker signals are agnostic with respect to group membership. Denoting the proportion of qualified workers in a group as $\pi^{\text{Maj}}$ and $\pi^{\text{Min}}$ the (non-performative) employer utility is:
\[U(\theta^{\text{Maj}}, \theta^{\text{Min}}) =  \lambda U(\theta^{\text{Maj}}) + (1-\lambda) U(\theta^{\text{Min}}),\] 
The population level response of labor force participants to deployed policies is largely similar in the group case:
\begin{multline}
\begin{gathered}
\pi^{\text{Maj}}(\theta^{\text{Maj}}, \theta^{\text{Min}}) = G(w[P(x>\theta^{\text{Maj}}\mid y=1) - P(x>\theta^{\text{Maj}}\mid y=0)]),\\
\pi^{\text{Min}}(\theta^{\text{Maj}}, \theta^{\text{Min}}) = G(w[P(x>\theta^{\text{Min}}\mid y=1) - P(x>\theta^{\text{Min}}\mid y=0)]).\\
\end{gathered}
\end{multline}
Note that under these assumptions, both the employer's non-performative hiring problem and the performative one are seperable; \ie{} the employer can simply solve two hiring problems for each group separately. As such, we define a stable pair $\vec{\theta}_{\text{stab}} = (\theta_{\text{stab}}^{\text{Maj}}, \theta_{\text{stab}}^{\text{Min}})$ as a pair of policies that are each stable and optimal pairs $\vec{\theta}_{\text{opt}} = (\theta_{\text{opt}}^{\text{Maj}}, \theta_{\text{opt}}^{\text{Min}})$ as a pair of policies that are each (performatively) optimal.

In \cite{coate1993Will} a policy pair $\vec{\theta}$ is discriminatory if $\pi^{\text{Maj}}(\vec{\theta}) \neq \pi^{\text{Min}}(\vec{\theta})$, this will also be our metric for discrimination in strategic hiring. It is worth emphasizing that our definition of a discriminatory stable pair is exactly the definition of a discriminatory market equilibrium in \cite{coate1993Will}.

We will see that, in a certain sense, an employer with a performatively optimal strategy enforces fairness amongst the groups, while a reactive employer provides no such guarantees. Some technical assumptions are needed on the conditions of the market; as the employer's hiring problem is separable with respect to group, we state these assumptions in the context of a single group market, with the implication that the requirements hold when the market is constrained to either group.

We will study markets with the following assumptions:
\begin{enumerate}
    \item There exists $\tilde{\theta}$ such that $\mathbb{P}(x>{\tilde{\theta}}|y=1) - \mathbb{P}(x> {\tilde{\theta}}|y=0)>1-\epsilon$
    \item $p_+ = p_-$
\end{enumerate}
The second assumption is primarily to simplify the problem. On the other hand, the first assumption is crucial. It implies that the firm find a hiring policy that provides excellent separation between qualified and unqualified workers (generally $\epsilon$ can be thought of as small). Under such an assumption, a performative firm will have both ability and motive to steer the market towards equitable equilibrium.
\begin{theorem}
\label{thm:equity-effect}
Assume $\theta^{*-1}(\theta)$ is $c-$Lipschitz on $[0, \tilde{\theta}]$, where $\theta^*(\pi)$ is the best employer response if $\mathbb{P}(y=1) = \pi$ and that $w>  M_G/(1-\epsilon)$  Then the following hold simultaneously:
    \begin{enumerate}
        \item There exists a stable pair $\vec{\theta}_{\text{stab}}$ such that $|\pi^{\text{Maj}}(\vec{\theta}_{\text{stab}}) - \pi^{\text{Min}} (\vec{\theta}_{\text{stab}})| > 1-c$.
        \item $|\pi^{\text{Maj}}(\vec{\theta}_{\text{opt}}) -\pi^{\text{Min}}(\vec{\theta}_{\text{opt}})| < \epsilon$ for all optimal pairs $\vec{\theta}_{\text{opt}}.$
    \end{enumerate}
\end{theorem}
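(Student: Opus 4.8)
The plan is to exploit separability: since costs, wages, signals, and profits are group-agnostic, each group's stable and optimal thresholds solve the same single-group problem, so it suffices to understand the single-group fixed points and maximizers and then pair two equilibria (one per group). Write $\Delta(\theta) = \mathbb{P}(X>\theta\mid y=1) - \mathbb{P}(X>\theta\mid y=0)$ so that $\pi(\theta) = G(w\Delta(\theta))$. The single most useful consequence of the hypotheses is that at the good-separation threshold $\tilde\theta$ we have $w\Delta(\tilde\theta) > w(1-\epsilon) > M_G$; since $c$ is almost surely bounded by $M_G$ we have $G(M_G)=1$, whence $\pi(\tilde\theta)=1$. Both parts flow from this.

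For the second (optimal) claim I would argue at the level of utility values. Using $p_+ = p_-$ I would first record the crude bound $U_{\text{perf}}(\theta) = p_+[\pi(\theta)\mathbb{P}(X>\theta\mid 1) - (1-\pi(\theta))\mathbb{P}(X>\theta\mid 0)] \le p_+\,\pi(\theta)\,\mathbb{P}(X>\theta\mid 1) \le p_+\,\pi(\theta)$, valid for every $\theta$. Next, evaluating at $\tilde\theta$ and using $\pi(\tilde\theta)=1$ together with $\mathbb{P}(X>\tilde\theta\mid 1)\ge \Delta(\tilde\theta) > 1-\epsilon$ gives $U_{\text{perf}}(\tilde\theta) > p_+(1-\epsilon)$. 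Hence any optimal threshold satisfies $U_{\text{perf}}(\theta_{\text{opt}}) \ge U_{\text{perf}}(\tilde\theta) > p_+(1-\epsilon)$, and combining with the crude upper bound yields $\pi(\theta_{\text{opt}}) > 1-\epsilon$. This holds for each group separately, so $\pi^{\text{Maj}}(\vec{\theta}_{\text{opt}})$ and $\pi^{\text{Min}}(\vec{\theta}_{\text{opt}})$ both lie in $(1-\epsilon,1]$ and their difference is below $\epsilon$.

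For the first (stable) claim I would produce two single-group stable thresholds that are far apart in induced skill and pair them. I would study the composed best-response map $\Psi(\pi) = G(w\Delta(\theta^*(\pi)))$, whose fixed points are exactly the stable skill levels; the monotone-likelihood-ratio assumption makes $\theta^*$ continuous and strictly decreasing with $\theta^*(1)=0$ and $\theta^*(0)$ at the top of the support, so $\Delta(\theta^*(0))=0$ and $\Psi(0)=0$ — the standard ``no-skill'' equilibrium, which I assign to the minority group. For the high equilibrium let $\pi_0 = \theta^{*-1}(\tilde\theta)$; then $\theta^*(\pi_0)=\tilde\theta$ gives $\Psi(\pi_0)=G(w\Delta(\tilde\theta))=1$, while $\Psi(1)=G(w\Delta(0))=0$, so by the intermediate value theorem $\Psi(\pi)-\pi$ has a zero $\pi_H\in[\pi_0,1)$, a stable skill level which I assign to the majority group. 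Finally the Lipschitz hypothesis bounds $1-\pi_0 = |\theta^{*-1}(0)-\theta^{*-1}(\tilde\theta)| \le c\,\tilde\theta < c$ (using $\theta^{*-1}(0)=1$ and $\tilde\theta<1$), so $\pi_H \ge \pi_0 > 1-c$; pairing this majority equilibrium with the minority's $\pi=0$ gives a stable pair with disparity $\pi_H > 1-c$.

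The main obstacle is the high-equilibrium step of the first claim: one must simultaneously (i) guarantee $\theta^{*-1}$ is genuinely well-defined and single-valued on $[0,\tilde\theta]$ so that the Lipschitz hypothesis and the identity $\theta^{*-1}(0)=1$ are meaningful — this requires $\theta^*$ to be a strict decreasing bijection near the lower boundary, which rests on the regularity of $\phi(\cdot\mid y)$ and on $\theta^*$ having no flat region at threshold $0$ — and (ii) rigorously justify continuity of $\Psi$ so the intermediate value argument applies. Care is also needed that the ``no-skill'' fixed point gives $\pi=0$ exactly (i.e.\ $\Delta$ vanishes at $\theta^*(0)$), and that the two single-group stable thresholds combine into an admissible stable pair for the joint problem; the latter is immediate from separability but should be stated explicitly.
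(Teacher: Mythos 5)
Your proposal is correct and follows essentially the same route as the paper's proof: for the stable claim, an intermediate-value argument on the gap between the best-response-inverse curve and the induced-skill curve (you phrase it in $\pi$-space via the composed map $\Psi$, the paper in $\theta$-space via $Z(\theta)=\theta^{*-1}(\theta)-\pi(\theta)$, but these are the same fixed points) produces a high-skill equilibrium bounded below by $1-c\tilde\theta$ via the Lipschitz hypothesis, paired with the trivial zero-skill equilibrium; for the optimal claim, both arguments show $U_{\text{perf}}(\tilde\theta)>p_+(1-\epsilon)$ and deduce $\pi(\theta_{\text{opt}})>1-\epsilon$ for every optimal point. Your upper bound $U_{\text{perf}}(\theta)\le p_+\pi(\theta)$ is a marginally more direct way to close the second part than the paper's rearrangement, but the substance is identical.
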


The first assumption requires sufficient smoothness on the inverse of the firm's best response; in general, it is easy to construct markets where both $c$ and $\epsilon$ are small. Such an example is presented in Appendix B, and some sufficient market conditions for these conditions to hold are provided in Appendix C.

We also study the problem in low-wage markets, a strong concavity assumption on the firms non-performative utility $U(\theta)$ is needed. This is generally not too difficult to meet, and a discussion is supplied in Appendix D.

\begin{theorem}
\label{thm:equity-effect-2}
Assume that the non-performative firm utility $U(\theta)$ is $\gamma-$ strongly concave for all $\pi$. Also, assume $g(\cdot)$, and $\phi(\cdot())$ are bounded above by $K_1$, and they are differentiable and $g'(\cdot)$, $\phi'(\cdot)$ are bounded above by $K_2$. If $G^{-1}(\frac{\phi(\tilde{\theta}|0)}{(1-\epsilon)(\phi(\tilde{\theta}|1) + \phi(\tilde{\theta}|0))}) < w < \gamma/(2 K_1 K_2)$ then the following hold simultaneously:  
    \begin{enumerate}
        \item There exists a stable pair $\vec{\theta}_{\text{stab}}$ such that $|\pi^{\text{Maj}}(\vec{\theta}_{\text{stab}}) - \pi^{\text{Min}} (\vec{\theta}_{\text{stab}})| > 0$.
        \item $|\pi^{\text{Maj}}(\vec{\theta}_{\text{opt}}) -\pi^{\text{Min}}(\vec{\theta}_{\text{opt}})| = 0$ for all optimal pairs $\vec{\theta}_{\text{opt}}.$
    \end{enumerate}
\end{theorem}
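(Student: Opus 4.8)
The plan is to exploit the fact that, because both groups share all market primitives and the employer's problem separates across groups, a stable (resp.\ optimal) pair is just an arbitrary pairing of single-group stable (resp.\ optimal) policies. Hence conclusion 2 reduces to showing the single-group performative objective has a \emph{unique} maximizer, and conclusion 1 reduces to exhibiting \emph{two distinct} single-group stable equilibria. Write $\Delta(\theta) = \mathbb{P}(X>\theta\mid y=1) - \mathbb{P}(X>\theta\mid y=0)$, so that $\pi(\theta) = G(w\Delta(\theta))$, and let $U(\theta;\pi) = \pi\,\mathbb{P}(X>\theta\mid y=1) - (1-\pi)\mathbb{P}(X>\theta\mid y=0)$ be the non-performative utility (using $p_+=p_-$). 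By the strong-concavity assumption, the reactive best response $\theta^*(\pi)$ is the unique root of the first-order condition $\pi\,\phi(\theta\mid1) = (1-\pi)\phi(\theta\mid0)$; by the monotone-likelihood-ratio assumption it is strictly decreasing in $\pi$, and the stable values of $\pi$ are exactly the fixed points of $T(\pi) \triangleq \pi(\theta^*(\pi)) = G\!\left(w\Delta(\theta^*(\pi))\right)$.

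For conclusion 2, I would show $U_{\text{perf}}(\theta)=U(\theta;\pi(\theta))$ is strictly concave. Since $U(\theta;\pi)$ is affine in $\pi$ (so $U_{\pi\pi}=0$), the chain rule gives $U_{\text{perf}}''(\theta) = U_{\theta\theta} + 2U_{\theta\pi}\,\pi'(\theta) + U_\pi\,\pi''(\theta)$, where $U_{\theta\theta}\le-\gamma$. Differentiating $\pi(\theta)=G(w\Delta(\theta))$ shows $\pi'$ and the leading part of $\pi''$ are $O(w)$, while the remaining factors ($U_{\theta\pi}=-(\phi(\theta\mid1)+\phi(\theta\mid0))$, $U_\pi$, and $g,g',\phi,\phi'$) are all bounded through $K_1,K_2$. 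Collecting terms yields a bound of the form $U_{\text{perf}}''(\theta)\le -\gamma + C\,w\,K_1 K_2$, which is strictly negative precisely when $w<\gamma/(2K_1K_2)$. Strict concavity forces a unique maximizer $\theta^\dagger$; both groups therefore deploy $\theta^\dagger$, so $\pi^{\text{Maj}}(\vec\theta_{\text{opt}}) = \pi^{\text{Min}}(\vec\theta_{\text{opt}}) = \pi(\theta^\dagger)$ and the gap is $0$.

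For conclusion 1, I would produce two fixed points of $T$. The first is the degenerate ``no-hire'' equilibrium: at $\pi=0$ the best response drives the threshold to the top of $[0,1]$, so $\Delta=0$ and $T(0)=G(0)=0$; thus $\pi=0$ is stable. For a second, positive fixed point, let $\pi_0 = \phi(\tilde\theta\mid0)/\big(\phi(\tilde\theta\mid1)+\phi(\tilde\theta\mid0)\big)$ be the unique $\pi$ whose best response equals $\tilde\theta$ (read off the first-order condition). Then $T(\pi_0) = G(w\Delta(\tilde\theta)) > G(w(1-\epsilon))$ by the separation assumption $\Delta(\tilde\theta)>1-\epsilon$, and the stated lower bound on $w$ is calibrated exactly so that $T(\pi_0) > \pi_0$. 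Since $T(1) = G(0) = 0 < 1$ and $T$ is continuous on $(0,1)$, the intermediate value theorem yields a fixed point $\pi^*\in(\pi_0,1)$. Assigning the $\pi=0$ trap to one group and the $\pi^*$ equilibrium to the other gives a stable pair with $|\pi^{\text{Maj}}-\pi^{\text{Min}}| = \pi^* > 0$.

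The main obstacle is the calibration of the wage window: one must verify it is nonempty and that the two endpoint conditions genuinely deliver the stated inequalities. The upper bound has to dominate the full $O(w)$ (and $O(w^2)$) Hessian perturbation by $\gamma$, so the collected constant must be checked against $2K_1K_2$; the lower bound has to move $T$ strictly above the diagonal at $\pi_0$, where converting $G(w\Delta(\tilde\theta))$ into the stated threshold $G^{-1}\big(\pi_0/(1-\epsilon)\big)$ uses $G(0)=0$ together with monotonicity/concavity of $G$ (e.g.\ $G(w(1-\epsilon))\ge(1-\epsilon)G(w)$). A secondary technical point is the boundary behavior of $\theta^*(\pi)$ as $\pi\to0,1$ (the best response reaching the edge of $[0,1]$), which is needed both for continuity of $T$ and for the existence of the trap equilibrium, and which rests on the monotone-likelihood-ratio and differentiability assumptions on $\phi$.
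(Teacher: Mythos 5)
Your proposal is correct and follows essentially the same route as the paper: the degenerate no-hire equilibrium together with an intermediate-value argument at $\tilde{\theta}$ (calibrated by the lower bound on $w$) for the discriminatory stable pair, and strong concavity of the performative utility under the upper bound $w<\gamma/(2K_1K_2)$ to force a unique optimal policy shared by both groups; your fixed-point formulation $T(\pi)=\pi(\theta^*(\pi))$ is just the $\pi$-space reparametrization of the paper's intersection of $\theta^{*-1}$ with $G(w\Delta(\cdot))$. If anything you are slightly more careful than the paper's write-up, which drops the cross-partial term $2\,\partial_{\theta_1}\partial_{\theta_2}U$ from the Hessian computation and states the comparison at $\tilde{\theta}$ with the sign reversed ($f_1(\tilde\theta)>f_2(\tilde\theta)$ where the IVT step actually needs $f_2(\tilde\theta)>f_1(\tilde\theta)$, i.e.\ your $T(\pi_0)>\pi_0$).
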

For this to truly be a ``low-wage market" we need $G^{-1}(\frac{\phi(\tilde{\theta}|0)}{(1-\epsilon)(\phi(\tilde{\theta}|1) + \phi(\tilde{\theta}|0))}) \approx 0$, we will see an example of markets that satisfy this and the other assumptions in the appendix.

The intuition behind optimal policies enforcing fairness in Theorems \ref{thm:equity-effect} and \ref{thm:equity-effect-2} is relatively simple; a performative employer that can distinguish skill level well posseses both direct incentive and ability to ensure that both majority and minority group worker qualification levels will be high, and as such the discriminatory gap will be small. On the other hand, a reactive employer has no direct incentive to increase the fraction of skilled workers; \ie\ they are too myopic to maximize the fraction of skilled workers. Thus, there is no invisible hand steering the market to more equitable equilibria. 
\section{Experiments on markets with continuous skill levels}
In this section, we study the sensitivity of the promising labor force skill and equity results in the preceding section to the underlying Coate-Loury model. A key limitation of the two-group Coate-Loury model is that the labor markets for the two groups operate independently of each other. Here we study the problem using a more sophisticated general equilibrium model (introduced in Appendix B) which allows for such cross-group effects. In our formulation of a labor market, we empirically observe the following economic takeaways: 
\begin{enumerate}
    \item As in the Coate-Loury model, the employer is always incentivized to be performative. Also, a performative employer benefits labor force participants because it increases labor force skill levels.
    \item Unfortunately, a performative employer harms workers by reducing their aggregate welfare. Additionally, the fairness benefits of a performative employer are brittle with respect to the assumptions of the underlying market.
\end{enumerate}
Due to space limitations, we only briefly describe our modifications to the Coate-Loury model here and defer a detailed description of the market to Appendix B. The first is that worker skill level $Y$ has Lebesgue density $p(y)$, implying that qualification is not binary, but rather a continuum of possible productivity levels. The second modification is to the reverse causal updates of the workers; while they remain strategic and update their outcomes according to the reverse causal mechanism \ref{eq: reverse causal strategic response}, the wage structure and cost are different. The cost is now a fixed function: $c(\cdot, \cdot): \cY \times \cY \rightarrow \mathbb{R}^+.$ The wage and utility structure will be more complex, again we refer to Appendix B for a detailed description of changes.
\subsection{Experiments With Linear Utility and Flat Wages}
In the \citet{coate1993Will} model, it is assumed that production functions are linear and that the wage structure is flat. In our context, this is the case where $w(x, f) = w$ and $u(y, H_{f}(y)) = H_{\pi}(y)(ay-1)$. Finally, the cost is specified to be quadratic: $c(y', y) = \frac{c}{2}(y'-y)^2$.

To study worker welfare and employer utility, we assume $p(y)$ is known, so the employer's policy optimization is non-stochastic. Figure \ref{Fig:Lin-Worker-Welfare} diagrams the worker side. In general, an optimal policy results in a larger portion of qualified workers, while a stable policy leads to greater aggregate worker welfare. The right-hand side of Figure \ref{Fig: alg1-vs-RSGD-2} plots the firm side; unsurprisingly, the employer prefers an optimal policy; additionally, the left-hand side of Figure \ref{Fig: alg1-vs-RSGD-2} demonstrates algorithm 1 (appendix A) in a labor market.
\begin{figure}
  \begin{subfigure}[b]{0.49\textwidth}
    \includegraphics[width=\textwidth]{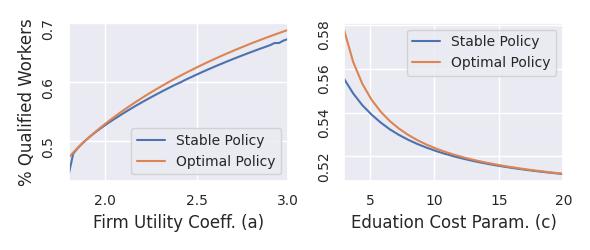}
    \caption{Left: $w=1,c=8$ // Right: $w=1, a=2$ }
  \end{subfigure}
  \hfill
  \begin{subfigure}[b]{0.49\textwidth}
    \includegraphics[width=\textwidth]{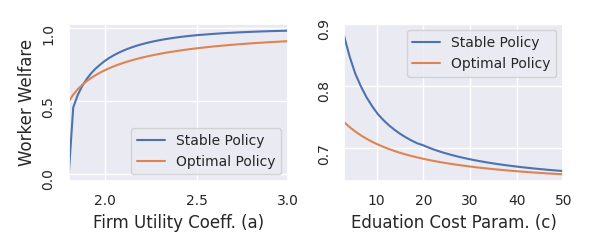}
    \caption{Left: $w=1,c=8$ // Right: $w=1, a=2$ }
  \end{subfigure}
  \caption{Worker Welfare and Proportion of Qualified Workers for Optimal and Stable Policies}
  \label{Fig:Lin-Worker-Welfare}
\end{figure}
\begin{figure}[]
  \begin{subfigure}[b]{0.25\textwidth}
    \includegraphics[width=\textwidth]{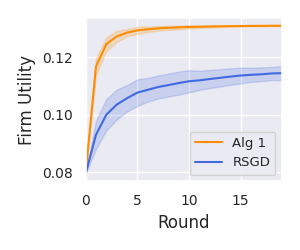}
  \end{subfigure}
  \begin{subfigure}[b]{0.25\textwidth}
    \includegraphics[width=\textwidth]{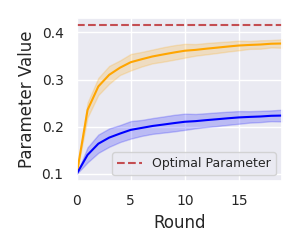}
  \end{subfigure}
  \hfill
  \begin{subfigure}[b]{0.49\textwidth}
    \includegraphics[width=\textwidth]{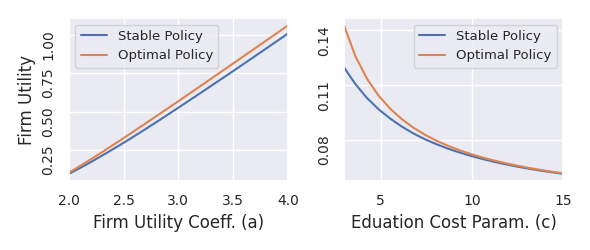}
  \end{subfigure}
  \caption{Left: RSGD and Alg 1 ($a=2, w=1, c=5$) // Right: employer Utility Under Policies (Left: $w=1,c=5$ // Right: $w=1, a=2$)}
  \label{Fig: alg1-vs-RSGD-2}
\end{figure}

\subsection{Experiments with non-linear utility and non-flat wages}
Next, we present experiments on markets that include identifiable groups and move beyond the assumptions of linear utility and flat wages. In the subsequent experiments, wages are non-linear and are determined by a Nash equilibrium among competing employers (inspired by \citep{moro2004general}). Additionally, we assume $u(y, H_{f}(y)) = y (H_{f}(y))^{\alpha}$, the important difference being that $u(\cdot, \cdot)$ is non-linear in hiring probability, which removes the separability property present in the Coate-Loury model. Finally, we no longer assume that cost is agnostic with respect to group; cost is now specified by $c(y', y, i) = \frac{c_i}{2}(y'-y)^2; {} i \in \{\text{Maj}, \text{Min}\}$.

Figure \ref{Fig: Nonlin-Worker-Welfare} diagrams the welfare of each group under different policies and parameters; in the top row, group costs and proportions are adjusted (costs are adjusted so total cost remains constant), while in the bottom row, the utility parameter $\alpha$ is adjusted.

\begin{figure}
  \begin{subfigure}[b]{0.49\textwidth}
    \includegraphics[width=\textwidth]{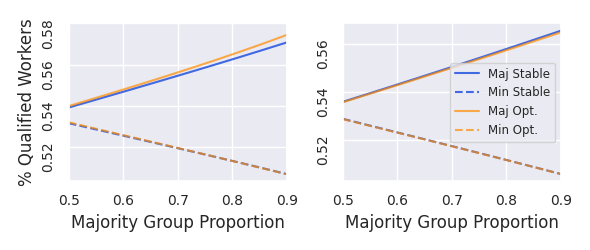}
    \caption{Left: $\alpha=2$ // Right: $\alpha = 0.5$ }
  \end{subfigure}
  \hfill
  \begin{subfigure}[b]{0.49\textwidth}
    \includegraphics[width=\textwidth]{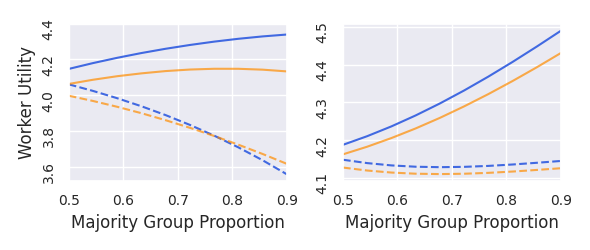}
    \caption{Left: $\alpha=2$ // Right: $\alpha = 0.5$ }
  \end{subfigure}
  \vfill
  \begin{subfigure}[b]{0.49\textwidth}
    \includegraphics[width=\textwidth]{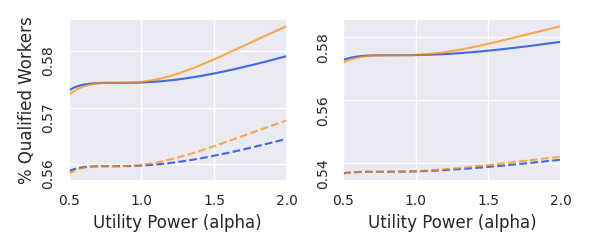}
    \caption{Left: $c_\text{Min} - c_\text{Maj}=5$ // Right: $c_\text{Min} - c_\text{Maj} = 20$ }
  \end{subfigure}
  \hfill
  \begin{subfigure}[b]{0.49\textwidth}
    \includegraphics[width=\textwidth]{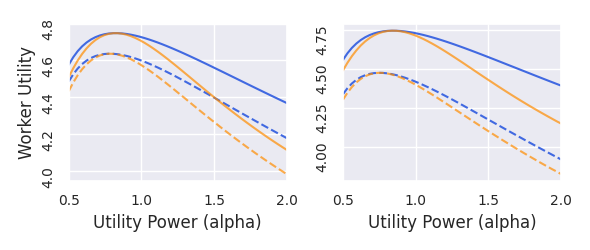}
    \caption{Left: $c_\text{Min} - c_\text{Maj}=5$ // Right: $c_\text{Min} - c_\text{Maj} = 20$ }
  \end{subfigure}
  \caption{Worker Welfare and Proportion of Qualified Workers for Optimal and Stable Policies}
  \label{Fig: Nonlin-Worker-Welfare}
\end{figure}
A similar pattern as before emerges in Figure \ref{Fig: Nonlin-Worker-Welfare}. Workers tend to be more qualified under performative policies (though this is broken in Figure \ref{Fig: Nonlin-Worker-Welfare} when the utility function is concave in the proportion of qualified workers), but they tend to have a higher average welfare under stable policies. Clearly, optimal policies no longer enforce fairness, as there is no appreciable difference in discrimination between the two.

\section{Conclusion}
We have introduced a problem setting that extends strategic classification by allowing agents to directly manipulate $y$, and in turn, cause a shift in $x$; additionally, we provided an algorithm for risk minimization in this setting. As an application of this new framework, we studied the effects of stable and optimal strategies on strategic agent welfare and equity in labor market models. We have demonstrated that a strategic/performative employer can help or harm strategic workers depending on the measurement of agent welfare used. In general, a proactive learner assists the strategic workers in becoming more skilled but harms the strategic agents by reducing their overall aggregate welfare. Additionally, in some but not all cases, a performative learner can assist in preventing discriminatory equilibrium in labor markets. Finally, we have seen that an employer will always benefit from deploying a performative/strategic hiring policy.
\subsubsection*{Acknowledgments}
This paper is based upon work supported by the National Science Foundation (NSF) under grants no. 2027737 and 2113373.
\bibliographystyle{plainnat}
\bibliography{YK, seamus}
\newpage
\appendix
\section{Stochastic optimization for reverse-causal strategic learning}
In this section, we develop a stochastic optimization algorithm to minimize the performative risk \eqref{eq: anti-causal-objective}. Previously, the work \citet{izzo2021How} gave an algorithm for performative optimization under the assumption that the performative map is of the form $\cD(\theta)=p(z; f(\theta))$, with only $f$ unknown. Our work is in a distinct environment, where the learner has an apriori model for the map $\cD$ but the base distribution is unknown. This is similar to the assumptions in \citet{strat-class-mp-2021}, which tackles the optimization for non-causal strategic classification. 
\subsection{Learning Set Up}
In practice, learning in reverse causal strategic environments is done sequentially (this is similar to the assumed in other performative prediction works). At each round of learning the learner publishes a decision $\theta$ and all data received in that round are drawn via $\cD(\theta)$. Succinctly, in a stochastic (or finite data) setting the order of learning at time is as follows:
\begin{enumerate}
\item The firm deploys decision $\theta_t$ 

\item Firm receives data (and corresponding loss/reward) drawn from 
$\cD(\theta_t)$. The firm may use this to either deploy a reactive decision or in some algorithm that converges towards an optimal policy (for example in the methodology illustrated below).
\end{enumerate}
Our methodology will be applicable in this stochastic setting, the two strongest assumptions we need are as follows:
\begin{enumerate}
    \item As in previous works on optimization for strategic classification (\cite{strat-class-mp-2021}), the learner is fully cognisant of the reverse causal strategic map $Y_+(y, \theta)$.
    \item The learner has knowledge of the feature generating mechanism $\phi(x|y)$. The justification here is that in many reverse causal strategic settings $\phi(x|y)$ is based on some standards that the learner sets (eg a hiring firm giving an interview) and thus the learner should have knowledge of this mechanism 
\end{enumerate}
Performative learning in a scenario where $Y_+(y, \theta)$ is unknown is an open and interesting problem. If $\phi(x \mid y)$ is unknown, it can be estimated from data pairs $(x,y)$.
\subsection{Methodology}
We let $\phi(x|y)$ denote the conditional density of $x$ given $y$. We also assume that we deploy some parametric model $f_{\theta}(x)$, and write $Y_+(f_{\theta}, y) \triangleq G_{\theta}(y)$.
In this notation, the cost function in the anti-causal objective can be rewritten as (recall $\cZ = \cX \times \cY$)
\[\textstyle
L(\theta) = \int_{\cZ} \ell(f_\theta(x), G_{\theta}(y)) \phi(x|G_{\theta}(y))p(y)dz.
\]
We derive the gradient in the following lemma.
\begin{lemma}
    The gradient of $L(\theta)$ is $\nabla L(\theta) = \nabla_1 L(\theta) + \nabla_2 L(\theta) + \nabla_3 L(\theta)$, where
    \begin{equation}
\label{eq: Total Grad}
\begin{aligned}\textstyle
\nabla_1L(\theta) =   \int_{\cZ}\partial_\theta[f_\theta(x)] \partial_1[\ell(f_\theta(x), G_{\theta}(y_i))] \phi(x|G_{\theta}(y_i))dz \\\textstyle
\nabla_2L(\theta) =  \int_{\cZ} \partial_\theta G_{\theta}(y_i)\partial_2[\ell(f_\theta(x), G_{\theta}(y_i))] \phi(x|G_{\theta}(y_i))dz\\\textstyle
\nabla_3L(\theta) = \int_{\cZ}\partial_{\theta}G_{\theta}(y_i)\ell(f_\theta(x), G_{\theta}(y_i)) \partial_2[\phi(x|G_{\theta}(y_i))]dz,
\end{aligned}
\end{equation}
and $\partial_{\theta} G_{\theta}(y)$ is the solution to the following:
\begin{equation}\textstyle
\label{eq: Argmax-Grad}
\int_{\cX}\partial_{\theta}[f_{\theta}(x)]\phi(x|G_{\theta}(y))dx +(\int_{\cX}f_{\theta}(x)\partial_{2}^2[\phi(x|G_{\theta}(y))]dx - \partial_{1}^2 c( G_{\theta}(y),y)) \partial_{\theta}G_{\theta}(y) = 0.
\end{equation}
\end{lemma}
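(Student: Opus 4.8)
The plan is to compute $\nabla L$ by differentiating under the integral sign and reduce everything to the single implicit quantity $\partial_\theta G_\theta(y)$, which I then pin down through the first-order optimality condition defining the strategic response. Throughout I would use the standing regularity of the model --- $\phi(x\mid y)$ and $f_\theta$ continuously differentiable and $c$ twice differentiable --- to justify, via dominated convergence, interchanging $\nabla_\theta$ with $\int_\cZ$.

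First I would write $L(\theta)=\int_\cZ \ell(f_\theta(x),G_\theta(y))\,\phi(x\mid G_\theta(y))\,p(y)\,dz$ and note that $\theta$ enters the integrand in three distinct places: the classifier value $f_\theta(x)$ in the first argument of $\ell$, the response $G_\theta(y)$ in the second argument of $\ell$, and that same response inside the conditional density $\phi(x\mid\cdot)$. Differentiating the product $\ell(f_\theta(x),G_\theta(y))\,\phi(x\mid G_\theta(y))$ with the chain rule, and using $\partial_\theta\big[\phi(x\mid G_\theta(y))\big]=\partial_2\phi(x\mid G_\theta(y))\,\partial_\theta G_\theta(y)$, produces exactly the three summands $\nabla_1 L,\nabla_2 L,\nabla_3 L$ of \eqref{eq: Total Grad}. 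This step is routine; the content is entirely in evaluating $\partial_\theta G_\theta(y)$.

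The crux is therefore the second display. By definition $G_\theta(y)=\argmax_{y'}\{W(f_\theta,y')-c(y',y)\}$ with $W(f_\theta,y')=\int_\cX f_\theta(x)\,\phi(x\mid y')\,dx$. Assuming the maximizer is interior and the objective smooth, $G_\theta(y)$ satisfies the stationarity identity
\[
\int_\cX f_\theta(x)\,\partial_2\phi(x\mid G_\theta(y))\,dx-\partial_1 c(G_\theta(y),y)=0 .
\]
I would then differentiate this identity totally in $\theta$ (equivalently, apply the implicit function theorem), which expresses $\partial_\theta G_\theta(y)$ in terms of the classifier's $\theta$-sensitivity and the curvature of the agent's objective, yielding \eqref{eq: Argmax-Grad}.

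The main obstacle is legitimizing this implicit differentiation: it requires $G_\theta$ to be a single-valued, differentiable function of $\theta$, and the implicit function theorem applies only when the coefficient multiplying $\partial_\theta G_\theta(y)$ --- namely $\int_\cX f_\theta(x)\,\partial_2^2\phi(x\mid G_\theta(y))\,dx-\partial_1^2 c(G_\theta(y),y)$, which is precisely the second derivative of the agent's welfare-minus-cost objective at the optimum --- is nonzero. This is exactly the strict second-order optimality (local strong concavity) guaranteed by the model's cost structure, e.g.\ a strongly convex $c$ such as the quadratic used later; the same condition rules out a vanishing denominator. The remaining work is bookkeeping: producing integrable dominating functions from the assumed boundedness and differentiability of $\phi$, $g$, and $c$ so that both exchanges of derivative and integral are valid.
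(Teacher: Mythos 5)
Your proposal follows the paper's proof essentially verbatim: the three-term decomposition of $\nabla L(\theta)$ is obtained by the product/chain rule under the integral sign (which the paper likewise treats as routine), and $\partial_\theta G_\theta(y)$ is obtained by applying the implicit function theorem to the first-order stationarity condition of the agent's argmax, which is exactly the paper's argument. Your added attention to the nondegeneracy of the second-order coefficient (so the implicit function theorem applies) and to dominated convergence is more careful than the paper's own terse write-up, but it does not change the route.
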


We propose that the learner utilize Monte-Carlo techniques to numerically approximate the integrals of \ref{eq: Argmax-Grad} and \ref{eq: Total Grad}. One applicable option is to use a REINFORCE approximation (see \cite{izzo2021How} or \cite{Williams-Reinforce-1992}). This technique allows us to re-write the integrals of \ref{eq: Total Grad}, \ref{eq: Argmax-Grad},
 \[\textstyle\int_{\cX}f_{\theta}(x)\partial_{2}^2[\phi(x|G_{\theta}(y))]dx = \int_{\cX} f_{\theta}(x) \partial_2^2 [\text{log}(\phi(x)|G_{\theta}(y))]\phi(x|G_{\theta}(y))dx \]
 \[= \mathbb{E}_{x\sim \phi(x|G_{\theta}(y))} [f_{\theta}(x) \partial_2^2 [\text{log}(\phi(x)|G_{\theta}(y))]],\]
 \[\textstyle\int_{\cX}\ell(f_\theta(x), G_{\theta}(y_i)) \partial_2[\phi(x|G_{\theta}(y_i))]dx = \int_{\cX}\ell(f_\theta(x), G_{\theta}(y_i)) \partial_2[\text{log}(\phi(x|G_{\theta}(y_i)))]\phi(x|G_{\theta}(y_i))dx\]
 \[= \mathbb{E}_{x\sim \phi(x|G_{\theta}(y_i))}\ell(f_\theta(x), G_{\theta}(y_i)) \partial_2[\text{log}(\phi(x|G_{\theta}(y_i)))].\]
 Since the learner knows $\phi$, these expressions can be computed via drawing samples from $\phi(x|G_{\theta}(y))$.
 The other integrals in \ref{eq: Argmax-Grad} and \ref{eq: Total Grad} are readily computable via this method in their vanilla forms. We summarize the proposed algorithm for the performative prediction problem in Algorithm \ref{alg:cap}.


\begin{algorithm}[H]
\caption{Reverse Causal Strategic SGD}\label{alg:cap}
\begin{algorithmic}
\State $\textit{Input: } u, \eta, n, \theta_0$
\While{not converged}
\State $\textit{Draw } \{y_i\}_{i=1}^n \sim p$ and observe $G_{\theta}(y_i)$
\State Draw $\{x_j^i\}_{j=1}^{n'}$ from $\phi(x|G_{\theta}(y_i))$
\State Compute $\hat{\nabla}L(\theta)(y_i, \{x_j^i\}_{j=1}^{n'})$ with REINFORCE
\State $\theta_t \gets \theta_{t-1} - \eta \sum_i \hat{\nabla}L(\theta)(y_i, \{x_j^i\}_{j=1}^{n'})$
\EndWhile
\end{algorithmic}
\end{algorithm}
Figure \ref{Fig: alg1-vs-rsgd} gives an example of this algorithm utilized in the hiring of strategic workers.

\subsection{Proof of Lemma A.1}

The only tricky term to evaluate is $\partial_{\theta} G_{\theta}(y)$. To deal with problem we utilize the implicit function theorem. We have
\[
\partial_{\theta} G_{\theta}(y) = \partial_{\theta} \textit{ argmax}_{y'} \int_{\cX} f_{\theta}(x)\phi(x|y')dx - c(y', y).
\]
Via the implicit function theorem $\partial_{\theta} G_{\theta}(y)$ must solve

\[
\partial_{\theta}[\partial_y (\int_{\cX} f_{\theta}(x)\phi(x|y')dx - c(y', y))=0].
\]
Which implies that $\partial_\theta G_{\theta} (y)$ is given by:
\begin{equation}
\int_{\cX}\partial_{\theta}[f_{\theta}(x)]\phi(x|G_{\theta}(y))dx +(\int_{\cX}f_{\theta}(x)\partial_{2}^2[\phi(x|G_{\theta}(y))]dx - \partial_{1}^2 c( G_{\theta}(y),y)) \partial_{\theta}G_{\theta}(y) = 0.
\end{equation}

\section{Market Formulation and Experiments}
\subsection{Market Details}
This market is conceptually similar to that of \cite{coate1993Will}, with the twist being that worker skill $y$ is now continuous, with $y \in \cY \subset \mathbb{R}$ generated from $p(y)$. The employer still makes hires based on noisy skill assessment $X$ generated from $\Phi(x|y)$ and group membership $g \sim \text{Ber}[\lambda]$. Employer production is specified by a utility function $u$ which depends on the skill level of a worker and the probability that worker is hired; $u: \mathbb{R} \times [0,1] \rightarrow \mathbb{R}$. Letting $H_{f}(y) = \lambda H_{f}^{\text{Maj}}(y) + (1-\lambda) H_{f}^{\text{Min}}(y)$ with $H_{f}^i(y) \triangleq \int_{\cX} f_i(x)d\Phi(x|y)$, the employer's (non performative) policy maximization objective is given by  
\[\text{max}_{f \in \cF} \: \mathbb{E}[u(y, H_{f}(y))].\]

A hired worker is given wage $w(x ,f)$, dependent on the observed skill level and policy choice and they can improve skills at a cost $c(\cdot, \cdot) \rightarrow \mathbb{R}^{+}$; thus workers strategically choose outcomes by selecting 
\[Y_+(y, f) = \argmax_{y'} \: \int_{\cX} w(x, f_i) f_i(x) d\Phi(x|y') - c(y', y).\]
Using the same notation as in section 2, the employer's post strategic response problem is given by 
\[\text{max}_{f \in \cF} \: \mathbb{E}[u(Y_+(y, f), H_{f}(Y_+(y, f)))].\]
In comparing worker well-being under stable and optimal policies, we first must quantify the well-being of the workers. As before, one metric we use is the proportion of qualified workers, which in this case is the probability a worker drawn from a distribution induced by a policy is skilled enough to provide positive employer utility.

We also introduce a second metric, the cost adjusted worker welfare. Towards this, we define the welfare of a worker with a given natural skill $y$ and group membership $i$ as their net benefit in the market post policy deployment and strategic update. Specifically, this is given by
\[\textstyle W_i(f, y) = \int_{\cX} w(x, f_i) f_i(x) d\Phi(x|Y_+(y, f_i)) - c(Y_+(y, f_i), y).\]
We will define the aggregate worker welfare as the expectation of $W_i (f, y)$ with respect to $p(y)$.

To study the techniques discussed in appendix A, we must focus on the case when the optimal policy $f$ is in a parametric policy class, \ie\ $\cF = f(x,\theta); \: \theta \in \mathbb{R}$. In the appendix, we provide sufficient conditions for this to hold. 
\begin{proposition}
Assume the ratio $\phi(x|y_1)/\phi(x|y_0)$ is increasing in $x$ for all $y_1 > y_ 0$, and the utility $u(y)$ is increasing in $y$.  Then the optimal policy choice for the employer is some threshold policy $f(x) = 1_{x>\theta}$.
\end{proposition}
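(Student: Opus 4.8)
The plan is to reduce the employer's problem to a pointwise (bang-bang) optimization and then use the monotone likelihood ratio assumption to show that the optimal acceptance region is an upper interval. First I would note that the hiring probability $H_f(y) = \int_\cX f(x)\phi(x\mid y)\,dx$ is linear in the policy $f$, and that in the relevant regime the utility is affine in the hiring probability with a per-hire value $v(y)$ that is increasing in $y$ (this is what the hypothesis ``$u(y)$ increasing in $y$'' supplies). Consequently the objective can be written as a single integral against $f$,
\[
J(f) = \mathbb{E}_y[u(y, H_f(y))] = \int_\cX f(x)\, g(x)\, dx, \qquad g(x) = \int_\cY v(y)\,\phi(x\mid y)\,p(y)\,dy,
\]
where in the fully nonlinear-in-$H$ case $g$ is instead the functional derivative $\delta J/\delta f$ evaluated at the candidate optimum. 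Since $f$ ranges over measurable maps into $[0,1]$ and the integrand is linear in $f(x)$, the maximizer is attained pointwise: $f^*(x)=1$ where $g(x)>0$ and $f^*(x)=0$ where $g(x)<0$, with the value on $\{g=0\}$ immaterial.

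Second, I would show that $g$ changes sign exactly once, from negative to positive. Writing $\phi_X(x)=\int_\cY \phi(x\mid y)p(y)\,dy$ for the marginal of $X$ and $p(y\mid x)$ for the induced posterior, one has $g(x)=\phi_X(x)\,\mathbb{E}[v(Y)\mid X=x]$, so $\mathrm{sgn}\,g(x)=\mathrm{sgn}\,\mathbb{E}[v(Y)\mid X=x]$. The crux is that the MLR assumption (that $\phi(x\mid y_1)/\phi(x\mid y_0)$ is increasing in $x$ for $y_1>y_0$) is exactly log-supermodularity of $\phi(x\mid y)$, which forces the posterior $p(\cdot\mid x)$ to be increasing in $x$ in the first-order stochastic dominance order. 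Combined with $v$ increasing, this makes $x\mapsto \mathbb{E}[v(Y)\mid X=x]$ nondecreasing, hence single-crossing; so there is a threshold $\theta$ with $g(x)>0 \iff x>\theta$, giving $f^*(x)=\mathbf{1}\{x>\theta\}$ (the degenerate thresholds $\theta=\pm\infty$ absorbing the always-hire and never-hire cases).

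The hard part will be the self-consistency needed when $u$ is genuinely nonlinear in the hiring probability: there the weight $v(y)=\partial_H u(y, H_{f^*}(y))$ depends on $y$ both directly and through $H_{f^*}(y)$, so its monotonicity in $y$ is not automatic and must be closed as a fixed point. I would handle this by observing that at any threshold candidate $H_{f^*}(y)$ is itself nondecreasing in $y$ (again by MLR), which preserves the monotonicity of $v$ and validates the single-crossing step; in the linear/flat-wage setting of this section $v$ is simply the per-hire value and the issue evaporates. The remaining technical points — the rigorous MLR$\Rightarrow$monotone-posterior implication and the treatment of the indifference set $\{g=0\}$ — are standard, and strict monotonicity of the likelihood ratio prevents $g$ from vanishing on a set of positive measure, so the threshold policy is essentially unique.
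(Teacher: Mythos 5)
Your proof is correct and follows essentially the same route as the paper's: reduce to a pointwise (bang-bang) optimization of $\int f(x)g(x)\,dx$ with $g(x)\propto \mathbb{E}[u(Y)\mid X=x]$, then use the MLR assumption to conclude that this posterior expectation is monotone in $x$, so the optimal acceptance region is an upper interval. The only differences are cosmetic: the paper carries out the MLR-implies-monotone-posterior step explicitly via integration by parts rather than citing it as standard, and your closing discussion of utilities nonlinear in the hiring probability goes beyond what the proposition (which concerns $u(y)$ alone, i.e.\ the linear case) actually requires.
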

\begin{proof}
 We denote the marginal density of the signal $x$ as $\nu(x)$ and subscript any distribution that depends on the policy with $f$; also WLOG assume $\cY = [0,1]$.  We can write the employers (non-performative) utility as the following:

$$U(f)= \int f (x) \nu(x) \int u(y) p(y|x)dydx$$

Thus the optimal policy is of the form:
$$f(x) = 1\{\int u(y) p(y|x)dy > 0\}$$
Thus the lemma statement is equivalent to $\int u(y) py|x)dy$ being monotone in $x$.
Using integration by parts we re-write this as the following:
$$\int u(y) p(y|x)dy = u(y) P(Y \leq y| X = x)|_{0}^{1} - \int u'(y) P(Y \leq y |X =x) dy$$
$$= u(1)*1 - u(0)*0 - \int u'(y) P_{f}(Y \leq y |X =x) dy$$
Since $u'(y) > 0$ by assumption; we need to show that $P(Y \leq y |X =x) $ is decreasing in $x$ for all $y$. We have the following:
$$P(Y \leq y |X =x) = \int_{0}^{y} P_(Y = y| X = x)dy = \frac{\int_{0}^{y}\phi(x|y)p(y)dy}{\int_{0}^{1}\phi(x|y)p(y)dy}$$

$$=  1 / (1 + \frac{\int_{0}^{y}\phi(x|y)p(y)dy}{\int_{y}^{1}\phi(x|y)p(y)dy})$$
By assumption $y_0<y_1$ the ratio $\phi(x|y_0)/\phi(x|y_1)$ is decreasing in $x$.    Thus for all $y$ and any distribution $p$ the ratio $\int_{0}^{y}\phi(x|y)p(y)dy/{\int_{y}^{1}\phi(x|y)p(y)dy}$ is monotonically decreasing in $x$.
\end{proof}

\subsection{Non-Linear Utility and Wages}
In several alternative labor market models \citep{arrow1971, moro2004general}, wage structures are not flat; instead they are determined by competition between competing employers. Additionally, the authors of \citet{moro2004general} find that moving beyond linear utility functions introduces inter-group interaction to the labor market, which is a key to revealing sources of discrimination.

Throughout this section we will assume that $u(y, H_{\pi}(y)) = \gamma(H_{\pi}(y))u(y)$ for strictly increasing functions $\gamma: [0,1] \rightarrow [0,1]$; $u: \mathbb{R} \rightarrow \mathbb{R}$; and additionally that $\Pi(x; \theta) = 1_{x>\theta}.$

\textbf{Limitations of Linear Utility:}

We briefly discuss the limitations of the model if the production function is linear, \ie\ $\gamma(\cdot)$ is the identity function. In such a case, via the linearity of expectation, one can write the vanilla employer utility objective as
\[\text{max}_{\pi} \lambda \mathbb{E}H_{\pi}^0(y)u(y) + (1-\lambda)
\mathbb{E}H_{\pi}^1(y)u(y). \]
Thus, in this case, the employer can simply pick an optimal policy $\pi_i$ for each group separately. Performative stable policies with which $\pi_1(x) \neq \pi_0 (x)$ may exist; but one group does not benefit from discrimination of another group. This lack of interaction is why we go beyond the linear utility case.

\textbf{Determination of Wage Structure:}

We adhere to the principle (which is shown rigorously in \citet{moro2004general}) that at a Nash equilibrium among two or more competing employers, the net profits of each employer should be zero. Equivalently: $\mathbb{E}[\gamma(H_{f}(y))u(y)] - \mathbb{E}[w(x;f)] = 0$. Thus, the wage offered to a worker with signal $x$ and group membership $i$ should be the expected utility of such a worker conditioned on these traits. We can write this for an individual in group $i$ as
\[\textstyle\mathbb{E}[\gamma(h_{f_i}(y))u(y)|X=x] = \mathbb{E}[\gamma(f_i(x))u(y)|X = x] = \gamma(f_i(x))\frac{\int_{\mathbb{R}}u(y)\phi(x|y)dp(y)}{\int_{\mathbb{R}}\phi(x|y)dp(y)}.\]
As such, for this section we will assume the wages proffered by the employer are
\begin{equation}\textstyle
    \label{eq: Nash-wage-structure}
    w_i(f, x) = \gamma(f_i(x))\frac{\int_{\mathbb{R}}u(y)\phi(x|y)dp(y)}{\int_{\mathbb{R}}\phi(x|y)dp(y)}.
\end{equation}
\begin{example}
    As an example of this assumed wage structure, consider a market specified by $u(y) = y$, $p(y) = \cN(0, \sigma_{y}^2), f(x|y) = \cN(y, \sigma_x^2)$. In this market, a employer (with hiring threshold $\theta$) offers wage
    \[\textstyle w(x, \theta) = \frac{x}{(1+\sigma_x^2/\sigma_y^2)} \mathbf{1}[x\geq \theta].\]
    Wages scale linearly with perceived skill of a worker; additionally, they increase as the profitability-noise ratio increases (a decrease in $\sigma_x^2/\sigma_y^2$).
\end{example}
\begin{proof}
The wage structure is given by: 
\[
1_{x\geq \theta}\frac{\int_\reals y e^{-(x-y)^2/2\sigma_x^2} e^{-y^2/2\sigma_y^2}dy}{\int_\reals e^{-(x-y)^2/2\sigma_x^2} e^{-y^2/2\sigma_y^2}dy}
 = 1_{x \geq \theta}\frac{\int_\reals y e^{(-1/2\sigma_x^2-1/2\sigma_y^2)(y-x/\sigma_x^2(1/\sigma_x^2+1/\sigma_y^2))^2}dy}{\int_\reals  e^{(-1/2\sigma_x^2-1/2\sigma_y^2)(y-x/\sigma_x^2(1/\sigma_x^2+1/\sigma_y^2))^2}dy}\]
 We can multiply the top and bottom row by the needed normalization constants, then the top row is the expectation of a normal random variable with mean $x/(1+\sigma_x^2/\sigma_y^2))$ and the bottom row integrates to one.
 \end{proof}
For Figure \ref{Fig: Nonlin-Worker-Welfare} we stick with this example of a market; additionally assuming that the cost is again quadratic $c(y', y) = \frac{c}{2}(y'-y)^2$, and that $\gamma(H_{f}(y)) = (H_{f}(y))^\alpha$. 

\subsection{Additional Experiments}
In figure \ref{Fig: LinearFairness} we present an example of a market that demonstrates the phenomena discussed in Theorems \ref{thm:equity-effect} and \ref{thm:equity-effect-2}. The left hand side plots two curves: the dotted one represents proportion of qualified workers resulting from a hiring policy, while the solid one is employers best response as a function of the proportion of qualified workers; intersections of these correspond to stable policies. There are two stable policies with an appreciable gap in worker qualification; simultaneously the performative utility has a unique maximum and thus all pairs of optimal policies are fair.
\begin{figure}[H]
\begin{centering}
  \begin{subfigure}[b]{0.45\textwidth}
    \includegraphics[width=\textwidth]{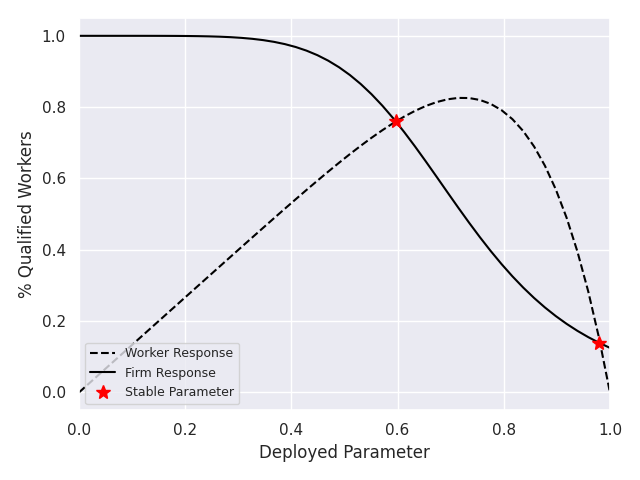}
  \end{subfigure}
  \begin{subfigure}[b]{0.45\textwidth}
    \includegraphics[width=\textwidth]{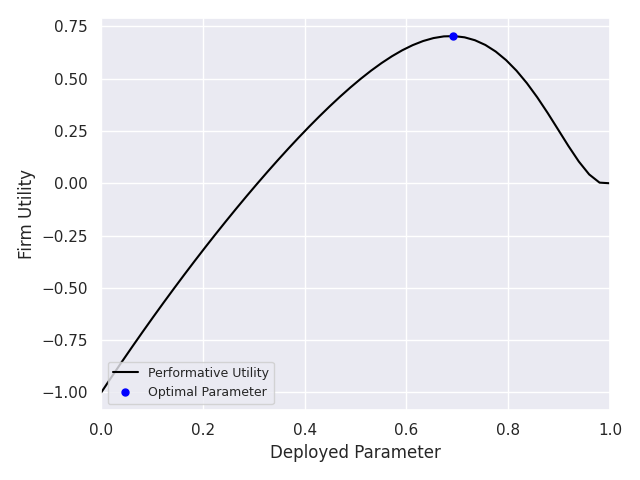}
  \end{subfigure}
  \caption{A market with fair optimal pairs and discriminatory stable pairs}
  \label{Fig: LinearFairness}
  \end{centering}
\end{figure}
Here is another demonstration of Algorithm 1 in the context of labor markets.
\begin{figure}[H]
\begin{center}
  \begin{subfigure}[b]{0.43\textwidth}
    \includegraphics[width=\textwidth]{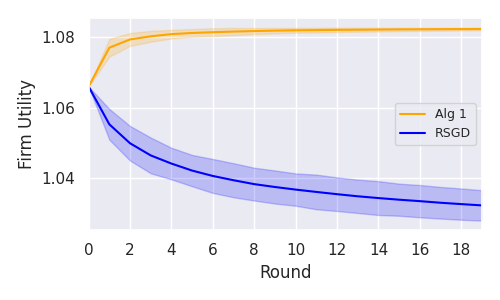}
  \end{subfigure}
  \begin{subfigure}[b]{0.43\textwidth}
    \includegraphics[width=\textwidth]{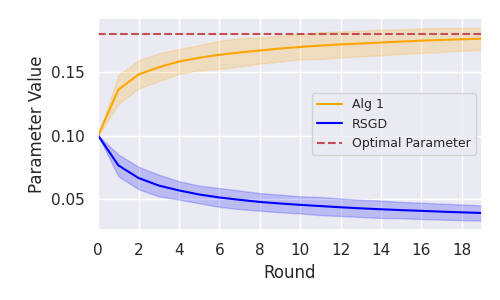}
  \end{subfigure}
  \caption{Algorithm 1 and RSGD with $a = 4, \: w = 1, \: c = 4$}
  \label{Fig: alg1-vs-rsgd}
\end{center}   
\end{figure}
\subsection{Experimental Details}
\textbf{Section 4.2}
\begin{enumerate}
    \item Figure \ref{Fig: alg1-vs-RSGD-2} (left):  Experiment is run on a market parameterized by $y_i \sim \textit{Unif}[0,1]$, $\phi(x|y) = (y+1)x^y 1_{0<x<1}$, $u(y) = ay -1$, $c(y, y') = \frac{c}{2}(y'-y)^2$ ($a = 2, \: w = 1, \: c = 5$). A total population of 1000 $y$ were drawn and at each round $100$ are sampled, learning was run for $20$ rounds. 10 separate trials were run, with darker lines showing the mean and the shade indicating a 90 percent confidence interval. Step size $(0.1/(t+1))$ is used for both RSGD and Algorithm 1 iterations, with an initial seed of $0.1$ for each. 
    \item Figure \ref{Fig:Lin-Worker-Welfare} and figure \ref{Fig: alg1-vs-RSGD-2} (right): The same market configuration is used (specific parameters are indicated in captions). Optimal policies were located using sci-py (\cite{2020SciPy-NMeth}) minimization packages. To locate stable points RRM was run until iterations were within a distance of $0.001$ of one another. 
    \item Figure \ref{Fig: LinearFairness}: For simplicity, a simpler market parametrization is used with $u(y) = 1_{\{y \geq 0.5\}} - 1_{\{y \leq 0.5\}}$, $\phi(x|y)= 2x 1_{\{y > 0.5\}} + 1* 1_{\{y<0.5\}}$, $p(y) \sim U[0, 0.5]$, $c(y, y') = c(y-y')_{+}$
\end{enumerate}
\textbf{Appendix B} 

\begin{enumerate}
    \item Figure \ref{Fig: LinearFairness}: $u(y) = 15y, p(y) = \cN(0,1), \phi(x|y) = \cN(y, 1), c(y', y) = \frac{c}{2}(y'-y)^2$. For the top row group costs are set as $c_{\text{Min}} = 25/(1-\lambda); c_{\text{Maj}}=20/(\lambda)$. In the bottom row group costs are set at $c_\text{Min} = 25, c_\text{Maj} = 20$ or $c_\text{Min} = 40, c_\text{Maj} = 20$, while $\lambda = 0.8$. Optimal and stable policies are located in a similar manner, with RRM being terminated after iterations are within $0.00001$ of one another for stable polices.
    \item  Figure \ref{Fig: alg1-vs-rsgd}: Experiment is run on a market parameterized by $y_i \sim \textit{Unif}[0,1]$, $\phi(x|y) = (y+1)x^y 1_{0<x<1}$, $u(y) = ay -1$, $c(y, y') = \frac{c}{2}(y'-y)^2$ ($a = 4, \: w = 1, \: c = 4$). A total population of 1000 $y$ were drawn and at each round $100$ are sampled, learning was run for $20$ rounds. 10 separate trials were run, with darker lines showing the mean and the shade indicating a 90 percent confidence interval. Step size $(0.1/(t+1))$ is used for both RSGD and Algorithm 1 iterations, with an initial seed of $0.1$ for each. 
\end{enumerate}
\section{Section 3 Proofs}
\subsection{Proof of Theorem 3.1}
\begin{proof}
For notational convenience we refer to the performative employer utility as $U$. We also denote $\text{TPR}(\theta) = P(X > \theta| y = 1)$ and $\text{FPR}(\theta) = P(X > \theta | y=0)$ as the true postive rate (resp. false positive rate) of the deployed classifier. The (decoupled) performative learners utility is the following:
\[ U(\theta_1, \theta_2) \triangleq \: \:  p_+ G(w[\text{TPR}(\theta_2) - \text{FPR}(\theta_2)]) \text{TPR}(\theta_1) - p_- (1-G(w[\text{TPR}(\theta_2) - \text{FPR}(\theta_2)])) \text{FPR}(\theta_1)\]
Consider differentiating this wrt the first argument:
$$\partial_1 U(\theta_1, \theta_2) = \partial_{\theta_1} [p_+ \pi(\theta_2)\int_{\theta_1}^1d\Phi(x\mid1)- p_-(1-\pi(\theta_2))\int_{\theta_1}^1 d\Phi(x|0)]$$
$$\stackrel{\textit{FTC}}{=}-p_+\pi(\theta_2)\phi(\theta_1\mid 1) + p_-(1-\pi(\theta_2))\phi(\theta_1\mid 0)$$
By the definition of stability (stable points are optimal on their induced distribution, and end points will not be optimal) any stable point $\theta_s$ must satisfy $\partial_1 U(\theta_1, \theta_2) \mid_{\theta_1=\theta_2 = \theta_s} = 0$.
Thus we must have:
\[-p_+ \pi(\theta_s)\phi(\theta_s|y=1)+p_-(1-\pi(\theta_s))\phi(\theta_s|y=0) = 0\]
Or equivalently:
\[\frac{p_-(1-\pi(\theta_s))}{p_+\pi(\theta_s)} = \frac{\phi(\theta_s|1)}{\phi(\theta_s|0)}\]
From here we apply the assumption that $\phi(x|1)/ \phi(x|0) > \delta_2$: 
$$\frac{p_-(1-\pi(\theta_s))}{p_+ \pi(\theta_s)} > \delta_2 \implies p_-(1-\pi(\theta_s)) > \delta_2 p_+ \pi(\theta_s)$$
$$\implies p_-\delta_2p_+\pi(\theta_s)+p_-\pi(\theta_s) \implies \frac{p_-}{p_+ \delta_2 + p_-} > \pi(\theta_s).$$
So any stable point $\theta_s$ satisfies $\pi(\theta_s) < \frac{p_-}{p_+ \delta_2 + p_-}$. The intuition from here is straightforward, if $p_+$ is large then $\pi(\theta_s)$ must be small but the optimal labor force skill level for a learner should not be small. To formalize this, consider plugging in the upper bound for $\pi(\theta_s)$ to the learners objective. Any $\theta$ which satisfies the bound will have:
\[U(\theta) \leq \frac{p_+p_-}{p_+\delta_2+p_-} \text{TPR}(\theta) \leq \frac{p_-}{\delta_2}\]
On the other hand consider $U(\tilde{\theta})$ (with $\tilde{\theta}$ defined as in assumption 1.)  Assume that $w$ is large enough so that $w \delta_1 > M_g$, then we have the following:
\[U(\tilde{\theta}) = p_+ * 1 * \text{TPR}(\tilde{\theta}) - p_- * 0 * \text{FPR}(\tilde{\theta}) > p_+ * (\delta_1+\text{FPR}(\tilde{\theta}))\]
    Now clearly if $p_+ > p_-/(\delta_1 \delta_2)$  it holds that $p_+ (\delta_1+\text{FPR}(\tilde{\theta})) > p_-/\delta_2$.  In this case, since $U(\tilde{\theta}) > U(\theta)$ for all $\theta$ such that $\pi(\theta) < p_-/(p_+\delta_2 +p_-)$ it can not hold that $\pi(\theta_{\text{opt}}) < p_-/(p_+\delta_2+p_-)$; this completes the proof of the first part.
To prove the second part note that $U(\theta_{\text{opt}}) \geq U(\tilde{\theta}) \geq p_+ \delta_1$. Thus:
\[U(\theta_{\text{stab}}) \leq \frac{p_- U(\theta_{\text{opt}})}{\delta_1(p_+\delta_2+p_-)}\]
Since by assumption  $p_+ \delta_1 \delta_2 > p_-$ we have $U(\theta_{\text{stab}}) < p_- U(\theta_{\text{opt}})/(p_-+p_-\delta_1)$ which completes the proof.
\end{proof}
\subsection{Proof of Theorem 3.2}
\begin{proof}
An identical argument as before will show that any stable point $\theta_s$ satisfies $\pi(\theta_s) < \frac{p_-}{p_+ \delta_2 + p_-}$ which in turn implies that for all stable points $U(\theta_s) < p_-/ \delta_2$. Now consider $U(\tilde{\theta})$. We have the following: 
\[U(\tilde{\theta}) = p_+ \pi(\tilde{\theta}) \text{TPR} (\tilde{\theta}) - p_- (1-\pi(\tilde{\theta})) \text{FPR}(\tilde{\theta})\]
By the assumptions of the theorem, we have that $ p_+ \pi(\tilde{\theta}) > p_- (1-\pi(\tilde{\theta}))$, and thus $U(\tilde{\theta}) > \delta_1$. Since $\delta_1 > p_- / \delta_2$, for all $\theta_s$, $U(\tilde{\theta}) > U(\theta_s)$, and in particular no optimal point $\theta_{\text{opt}}$ can have $\pi(\theta_{\text{opt}}) <  \frac{p_-}{p_+ \delta_2 + p_-}$.

The statement that $U_{\text{perf}}(\theta_{\text{stab}}) \leq U_{\text{perf}}(\theta_{\text{opt}})$ follows directly from the definition of optimal points.
\end{proof}
\subsection{Proof of Theorem 3.3}
\begin{proof}
We first prove the statement on stable points.
Let $\theta^*(\pi)$ denote the optimal policy for the vanilla utility if the proportion of qualified workers is $\pi$. Note that $\theta^*(\pi)$ is a monotonically decreasing function of $\pi$. The existence of the stable set $\vec{\theta}_{\text{stab}}(\theta^\text{Maj}_{\text{stab}}, \theta^\text{Min}_{\text{stab}})$ such that $\pi^{\text{Maj}}(\vec{\theta}_{\text{stab}}) \neq \pi^{\text{Min}} (\vec{\theta}_{\text{stab}})$ is equivalent to the existence of multiple intersections of the following functions from $\Theta = [0,1]$ to $ [0,1]$: $$f_1(\theta) =  \theta^{* -1}(\theta),$$
$$f_2(\theta) =  G(w(P(x>\theta|y=1))-P(x>\theta|y=0))$$

Let $Z(\theta) = f_1(\theta) - f_2(\theta)$. Note that $Z(0) = 1 - 0 > 0$ and that $Z(1) = 0 - 0 = 0$. Additionally, consider $Z(\tilde{\theta})$. By the assumptions on $w$, $f_2(\tilde{\theta}) = 1$, and since $f_1(\theta)$ is strictly decreasing in $\theta$ $f_1(\tilde{\theta}) < 1$ so $Z(\tilde{\theta}) < 0$. By the assumptions $Z$ is continuous and thus by IVT $Z$ has at least one additional zero on $[0,1]$. WLOG let $\theta_{\text{stab}}^\text{Min} = 1$ so that $\pi(\theta_{\text{stab}}^\text{Min}) = 0$, and let $\theta_{\text{stab}}^\text{Maj}$ be the other stable point whose existence we have just proved. Note that $\theta_{\text{stab}}^\text{Maj} < \tilde{\theta}$ so by the c-Lipschitz property of $f_1(\theta)$ we have the following:
\[|f_1(\theta_{\text{stab}}^\text{Maj}) - f_1(0)| = |f_1(\theta_{\text{stab}}^\text{Maj}) - 1| \leq c\theta_{\text{stab}}^{\text{Maj}} \leq c\tilde{\theta}.\]
Since $f_1(\theta_{\text{stab}}^\text{Maj}) = \pi(\theta_{\text{stab}}^\text{Maj})$ we have $\pi(\theta_{\text{stab}}^\text{Maj}) > 1-c\tilde{\theta} > 1 - c$ which completes part 1 of the proof.

We now complete the second part of the proof. Plugging in $\tilde{\theta}$ to the performative utility and using the assumptions on $G$, $\epsilon$, $w$ we have
\[U(\tilde{\theta}) > p_+ *1 *(1-\epsilon) - p_- *0 * \epsilon = p_+ (1-\epsilon) \]
Since any optimal point $\theta_{\text{opt}}$ satisfies $U(\theta_{\text{opt}}) \geq U(\tilde{\theta})$ we know that for all optimal points:
\[p_+ \pi(\theta_{\text{opt}})  P(x>\theta_{\text{opt}}| y=1)- p_- (1-\pi(\theta_{\text{opt}})) P(x>\theta_{\text{opt}}| y=0) > p_+(1-\epsilon) \]
\[\implies \pi(\theta_{\text{opt}})[P(x>\theta_{\text{opt}}\mid y=1) + P(x>\theta_{\text{opt}}\mid y=0)] - P(x>\theta_{\text{opt}}\mid y=0) > p_+(1-\epsilon) \]
\[\implies \pi(\theta_{\text{opt}}) > \frac{1-\epsilon}{P(x>\theta_{\text{opt}}\mid y=1) + P(x>\theta_{\text{opt}}\mid y=0)}+\] 

\[ \frac{ P(x>\theta_{\text{opt}}\mid y=0)}{P(x>\theta_{\text{opt}}\mid y=1) + P(x>\theta_{\text{opt}}\mid y=0)} \]
Since $0 \leq P(x>\theta_{\text{opt}}\mid y=1) \leq 1 $ and $0 \leq P(x>\theta_{\text{opt}}\mid y=0)$, a simple calculation will show
\[\frac{1-\epsilon}{P(x>\theta_{\text{opt}}\mid y=1) + P(x>\theta_{\text{opt}}\mid y=0)}+\] 
\[ \frac{ P(x>\theta_{\text{opt}}\mid y=0)}{P(x>\theta_{\text{opt}}\mid y=1) + P(x>\theta_{\text{opt}}\mid y=0)} > 1-\epsilon.\]
This in turn implies $\pi(\theta_{\text{opt}}) > 1-\epsilon$ for any optimal point.  Thus all optimal pairs $\vec{\theta}_{\text{opt}} = (\theta^\text{Maj}_{\text{opt}}, \theta^{\text{Min}}_{\text{opt}})$ must satisfy $|\pi^{\text{Maj}}(\vec{\theta}_{\text{opt}}) -\pi^{\text{Min}}(\vec{\theta}_{\text{opt}})| < \epsilon$
\end{proof}
\begin{remark}

It is generally possible to construct markets in which both $\epsilon$ and $c$ are small. In fact, the derivative of $\theta^{* -1}(\theta)$ is given by
\[[\theta^{* -1}(\theta)]' = \frac{\theta^{* -1}(\theta) \phi'(\theta|1) + (1-\theta^{* -1}(\theta))\phi'(\theta|0)}{\phi(\theta|1) + \phi(\theta|0)}.\]
Thus, for example any market for which $\phi'(x\mid 1) \approx 0$ for $x < \tilde{\theta}$ and $\phi(x|0) = 1$ can satisfy both. See Figure \ref{Fig: LinearFairness} in appendix B for an example.
\end{remark}

\subsection{Proof of Theorem 3.4}
\begin{proof}
We first prove the statement on stable points.
Let $\theta^*(\pi)$ denote the optimal policy for the vanilla utility if the proportion of qualified workers is $\pi$. The existence of the stable set $\vec{\theta}_{\text{stab}}(\theta^\text{Maj}_{\text{stab}}, \theta^\text{Min}_{\text{stab}})$ such that $\pi^{\text{Maj}}(\vec{\theta}_{\text{stab}}) \neq \pi^{\text{Min}} (\vec{\theta}_{\text{stab}})$ is equivalent to the existence of multiple intersections of the following functions from $\Theta = [0,1]$ to $ [0,1]$: $$f_1(\theta) =  \theta^{* -1}(\theta),$$
$$f_2(\theta) =  G(w(P(x>\theta|y=1))-P(x>\theta|y=0))$$

Let $Z(\theta) = f_1(\theta) - f_2(\theta)$. Note that $Z(1) = 0 - 0 = 0$. Additionally, consider $Z(\tilde{\theta})$. By the strong convexity assumptions, we can directly calculate that:
\[\theta^{* -1} (\tilde{\theta}) = \frac{\phi(\tilde{\theta}|0)}{\phi(\tilde{\theta}|1) + \phi(\tilde{\theta}|0)} \]
Thus, by assumption, $f_1(\tilde{\theta}) > f_2(\tilde{\theta})$ and since these functions are continuous, there must be an additional $\theta'$ such that $f_1(\theta') = f_2(\theta')$, thus the pair $(\theta', 0)$ will be discriminatory.

Next we prove the statement on optimal policies. We will show that if the vanilla utility $U(\theta, \pi)$ is $\gamma-$ strongly concave and that $w$ is small enough, the performative utility will be concave and thus $\theta^{\text{Maj}}_{\text{opt}} = \theta^{\text{Min}}_{\text{opt}}$ since the optimal policy will be unique.

Consider the decoupled performative utility:
\[U_{\text{perf}}(\theta_1, \theta_2) =\pi(\theta_2) \text{TPR}(\theta_1) + (1-\pi(\theta_2)) \text{FPR}(\theta_2) \] 
The second derivative of $U_{\text{perf}}(\theta)$ is given by the following:
\[U''_{\text{perf}}(\theta) = \frac{\partial^2}{\partial \theta_1^2} U_{\text{perf}}(\theta_1, \theta_2) + \frac{\partial^2}{\partial \theta_2^2} U_{\text{perf}}(\theta_1, \theta_2)\]

We wish to show that $U''_{\text{perf}}(\theta) < -\gamma'$ form some $\gamma' > 0$. Note that by assumption we have the following:
\[ \frac{\partial^2}{\partial \theta_1^2} U_{\text{perf}}(\theta_1, \theta_2) < -\gamma.\]

Letting $\Delta(\theta) = \text{TPR}(\theta) - \text{FPR}(\theta)$
\[\frac{\partial^2}{\partial \theta_2^2} U_{\text{perf}}(\theta_1, \theta_2) = w(w g'(w \Delta(\theta))(\phi(\theta \mid 1) - \phi(\theta \mid 0)) + g(w(\Delta(\theta)))(\phi'(\theta \mid 1) - \phi'(\theta \mid 0)) )\]
Because of the boundedness assumptions $g()$ and $\phi()$ we have that
\[|\frac{\partial^2}{\partial \theta_2^2} U_{\text{perf}}(\theta_1, \theta_2)| < 2wK_1 K_2\]
Thus if $w < \gamma/ 2 K_1 K_2$, the performative firm utility will be strongly concave and thus optimal points will be unique and non-discriminatory.
\end{proof}
\begin{remark}
    Consider the family of markets with $\phi(x\mid 0) = -ax+\frac{a}{2}+1$ and $\phi(x \mid 1) = (n+1)x^{n}$ for some $a$ small and $n$ large. Note that if $p_+ = p_-$ are large enough, this market is strongly concave. Additionally, it is easy to see that there will be a $\tilde{\theta}$ that provides good seperation and additionally that $\phi(\tilde{\theta} \mid 0) \approx 1$ while $\phi(\tilde{\theta} \mid 1) \approx 0$, and thus the lower bound on $w$ in this market can be small as well.
\end{remark}
\section{Discussion on convergence of RRM (the myopic firms process)}
For the purpose of this discussion assume that $X \in [0,1]$. From \citet{perdomo2020Performative}, a myopic firm is sure eventually stabilize if the following conditions are met on the market:
\begin{enumerate}
    \item $\pi(\theta)$ must be $\epsilon-$ Lipschitz continuous, for some $\epsilon$ not large. Note that:
    \[|\pi'(\theta)| \leq w \text{ sup}_{c \in \mathcal{C}}g (c) \text{ sup}_{x \in [0,1]} |\phi(x\mid 1) - \phi(x \mid 0)|\]
    In general, as long as $g()$ and $\phi(|y)$ are bounded functions, then this condition will hold for $\epsilon = w *K$. Additionally, if wages are low, $\epsilon$ will be small.
    \item The firms non-performative utility $U(\theta)$ and the agents aggregate response $\pi(\theta)$ must be sufficiently smooth. This will follow as long as the functions $G(), \phi(\cdot|y)$ are smooth, which is a technical condition and does not impact the dynamics of the market.
    \item The firms vanilla utility $U(\theta) = p_+ \pi \text{TPR}(\theta) - p_- (1-\pi) \text{FPR}(\theta)$ should be $\gamma-$ strongly concave for all $\pi$. Note that:
    \[U''(\theta) = p_+ \pi \phi'(\theta|1) - p_- (1-\pi)\phi'(\theta|0)\]
    For example, if $\phi'(\theta|1) > \gamma/p_+$ and if $\phi'(\theta|0) < -\gamma/p_-$ then $\gamma-$ strong concavity will be guaranteed.
\end{enumerate}
The most crucial ingredient for convergence is the $\epsilon-$ sensitivity of $\pi(\theta)$. In general, if the function
$$\tau(\theta) = \theta^*(\pi) \circ \pi(\theta)$$
$$\theta^*(\pi) = \argmax_{\theta \in [0,1]} U(\theta, \pi)$$
has a derivative with upper bound less then one then convergence of RRM will be ensured. This will occur, for example, if the market is regular enough so that $\theta^*(\pi)$ has a bounded derivative and $w$ is small.

\end{document}